\newcommand{\Real}{\mathbb{R}}
\newcommand{\Integer}{\mathbb{Z}}
\DeclareMathOperator{\OpB}{\mathcal{B}}
\DeclareMathOperator{\OpC}{\mathcal{C}}
\DeclareMathOperator{\range}{ran}
\DeclareMathOperator*{\argmin}{arg\,min}
\DeclareMathOperator*{\argmax}{arg\,max}
\newcommand{\MeasurableMaps}{\mathscr{M}}
\newcommand{\Lebesgue}{L}
\newcommand{\stochastic}[1]{\mathsf{#1}}
\newcommand{\SAlg}[1]{\mathfrak{S}_{#1}}
\newcommand{\PClass}[1]{\mathscr{P}_{#1}}
\DeclareMathOperator{\Expect}{\mathbb{E}}
\newcommand{\ProbMeas}{\mathbb{P}}
\newcommand{\CondLaw}{\pi}
\newcommand{\Loss}{L}
\newcommand{\distance}[1]{\ell_{#1}}
\newcommand{\risk}{\mathcal{R}}
\newcommand{\AverageRisk}{\risk}
\newcommand{\DataSpace}{Y}
\newcommand{\DataSpaceSAlg}{\SAlg{\DataSpace}}
\newcommand{\DataModel}{\mathcal{M}}
\newcommand{\DataLikelihood}{\mathcal{L}}
\newcommand{\ProbNoise}{\ProbMeas_{\mathrm{noise}}}
\newcommand{\datamanifold}{\mathbb{M}}
\newcommand{\data}{y}
\newcommand{\stdata}{\stochastic{\data}}
\newcommand{\datanoise}{e}
\newcommand{\stdatanoise}{\stochastic{\datanoise}}
\newcommand{\RecSpace}{X}
\newcommand{\SignalLoss}{\distance{\RecSpace}}
\newcommand{\RecSpaceSAlg}{\SAlg{\RecSpace}}
\newcommand{\ProbSignalTrue}{\pi^*}
\newcommand{\signaldomain}{\Omega}
\newcommand{\signal}{x}
\newcommand{\signaltrue}{\signal^*}
\newcommand{\stsignal}{\stochastic{\signal}}
\newcommand{\signalother}{x'}
\newcommand{\SignalPrior}{\pi_0}
\newcommand{\MLSignalParamSet}{\Theta}
\newcommand{\MLsignalparam}{\theta}
\newcommand{\DecisionClass}{\mathscr{D}}
\newcommand{\DecisionSpace}{D}
\newcommand{\DecisionLoss}{\distance{\DecisionSpace}}
\newcommand{\DecisionSpaceSAlg}{\SAlg{\DecisionSpace}}
\newcommand{\decision}{d}
\newcommand{\decisionother}{d'}
\newcommand{\decisiontrue}{\decision^*}
\DeclareMathOperator{\featuremap}{\tau}
\newcommand{\TaskSpace}{\triangle}
\newcommand{\TaskPrior}{\eta_{0}}
\newcommand{\TaskLoss}{\Loss_{\DecisionSpace}}
\newcommand{\TaskModel}{\ProbMeas}
\newcommand{\task}{z}
\newcommand{\sttask}{\stochastic{\task}}
\newcommand{\MLtaskparam}{\vartheta}
\newcommand{\MLTaskParamSet}{\Xi}
\newcommand{\JointLawSignalData}{\mu}
\newcommand{\JointLawTaskSignal}{\eta}
\newcommand{\JointLawTaskData}{\nu}
\newcommand{\JointLawTaskSignalData}{\sigma}
\DeclareMathOperator{\ForwardOp}{\mathcal{A}}
\newcommand{\RecOp}[1]{\ForwardOp_{#1}^{\dagger}}
\newcommand{\RecOpOptim}{\hspace{2pt}\widehat{\hspace{-2pt}\ForwardOp}^{\dagger}}
\newcommand{\TaskOp}[1]{\mathcal{T}_{#1}}
\newcommand{\TaskOpOptim}{\widehat{\mathcal{T}}}
\newcommand{\JointLoss}{\ell_{\mathrm{joint}}}
\DeclareMathOperator{\EndToEnd}{\mathcal{B}}
\newcommand{\WordSpace}{\mathcal{W}}
\newcommand{\CaptionSpace}{\mathcal{C}}
\newcommand{\word}{w}
\newcommand{\stopword}{\word_{\text{stop}}}
\newcommand{\dint}{\,\mathrm{d}}
\newcommand{\Cdot}{\,\cdot\,}
\newcommand{\otoprule}{\midrule[\heavyrulewidth]}
\theoremstyle{remark}
\crefname{subsection}{section}{sections}
\Crefname{subsection}{Section}{Sections}
\crefname{subsubsection}{section}{sections}
\Crefname{subsubsection}{Section}{Sections}
\crefname{equation}{}{}
\Crefname{equation}{}{}
\crefname{item}{}{}
\Crefname{item}{}{}
\addunit{\pixel}{pixel}
\addunit{\voxel}{voxel}
\addunit{\decibel}{dB}
\addunit{\byte}{B}
\addunit{\hounsfield}{HU}
\DeclareAcronym{ML}{
	short = ML,
	long = maximum likelihood}
\DeclareAcronym{EM}{
	short = EM,
	long = expectation maximization}
\DeclareAcronym{MCMC}{
	short = MCMC,
	long = Markov chain Monte Carlo}
\DeclareAcronym{MAP}{
	short = MAP,
	long = maximum a posteriori}
\DeclareAcronym{CM}{
	short = CM,
	long = conditional mean}
\DeclareAcronym{ADMM}{
	short =  ADMM,
	long = alternating direction method of multipliers}
\DeclareAcronym{TV}{
	short = TV,
	long = total variation}
\DeclareAcronym{ODE}{
	short = ODE,
	long = ordinary differential equation}
\DeclareAcronym{PDE}{
	short = PDE,
	long = partial differential equation}
\DeclareAcronym{RKHS}{
	short = RKHS,
	long = reproducing kernel Hilbert space}
\DeclareAcronym{ODL}{
	short = ODL,
	long = Operator Discretization Library}
\DeclareAcronym{API}{
	short = API,
	long = application programming interface}
\DeclareAcronym{CT}{
	short = CT,
	long = computed tomography}
\DeclareAcronym{MRI}{
	short = MRI,
	long = magnetic resonance imaging}
\DeclareAcronym{FBP}{
	short = FBP,
	long = filtered backprojection}
\DeclareAcronym{PSNR}{
	short = PSNR,
	long = peak signal to noise ratio}
\DeclareAcronym{ReLU}{
	short = ReLU,
	long = rectified linear unit}  
\DeclareAcronym{GPU}{
	short = GPU,
	long = graphics processing unit}  
\DeclareAcronym{CPU}{
	short = CPU,
	long = central processing unit}      
\DeclareAcronym{IID}{
	short = i.i.d.,
	long = independent and identically distributed}      
\DeclareAcronym{LSTM}{
	short = LSTM,
	long = long short-term memory}
\DeclareAcronym{SGD}{
	short = SGD,
	long = stochastic gradient descent}
\DeclareAcronym{GAN}{
	short = GAN,
	long = generative adversarial network}
\newcommand{\TheTitle}{Task adapted reconstruction for inverse problems}
\newcommand{\TheAuthors}{Jonas Adler, Sebastian Lunz, Carola-Bibiane Sch\"onlieb, and Ozan \"Oktem}
\title{{\TheTitle}}
\author{
	Jonas Adler\thanks{Department of Mathematics, KTH--Royal Institute of Technology, 100 44 Stockholm, Sweden; Elekta AB, Box 7593, SE-103 93 Stockholm, Sweden (\email{jonasadl@kth.se}).}
	\and
	Sebastian Lunz\thanks{Centre for Mathematical Sciences, University of Cambridge, Cambridge CB3 0WA, United Kingdom (\email{sl767@cam.ac.uk}).}
	\and
	Olivier Verdier\thanks{Department of Mathematics, KTH--Royal Institute of Technology, 100 44 Stockholm, Sweden;
    Department of Computing, Mathematics and Physics, Western Norway University of Applied Sciences, Bergen, Norway
		(\email{olivierv@kth.se}, \email{olivier.verdier@hvl.no}).}
	\and
	Carola-Bibiane Sch\"onlieb\thanks{Centre for Mathematical Sciences, University of Cambridge, Cambridge CB3 0WA, United Kingdom (\email{cbs31@cam.ac.uk}).}
	\and
	Ozan \"Oktem\thanks{Department of Mathematics, KTH--Royal Institute of Technology, 100 44 Stockholm, Sweden   
		(\email{ozan@kth.se}).}
}
\begin{document}
	\maketitle

	\begin{abstract}
		The paper considers the problem of performing a task defined on a model parameter that is only observed indirectly through noisy data in an ill-posed inverse problem.
		A key aspect is to formalize the steps of reconstruction and task as appropriate estimators (non-randomized decision rules) in statistical estimation problems. 
		The implementation makes use of (deep) neural networks to provide a differentiable parametrization of the family of estimators for both steps.
		These networks are combined and jointly trained against suitable supervised training data in order to minimize a joint differentiable loss function, resulting in an end-to-end task adapted reconstruction method.
		The suggested framework is generic, yet adaptable, with a plug-and-play structure for adjusting both the inverse problem and the task at hand.
		More precisely, the data model (forward operator and statistical model of the noise) associated with the inverse problem is exchangeable, e.g., by using neural network architecture given by a learned iterative method.
		Furthermore, any task that is encodable as a trainable neural network can be used. 
		The approach is demonstrated on joint tomographic image reconstruction, classification and joint tomographic image reconstruction segmentation. 
	\end{abstract}

	\begin{keywords}
		Inverse problems, image reconstruction, tomography, deep learning, feature reconstruction, segmentation, classification, regularization
	\end{keywords}
	
	\begin{AMS}
	      47A52, 65F22, 65F22, 34A55, 49N45, 35R30,  62G86, 62C10, 92B20, 92C55
	\end{AMS}
	
	\acresetall
	\acuse{MRI,ADMM,GPU,CPU}

	\section{Introduction}\label{sec:Intro}
	The overall goal in inverse problems is to determine model parameters such that model predictions match measured data to sufficient accuracy.
	Such problems arises in various scientific disciplines. One example is biomedical imaging where the image is the ``model parameter'' that needs to be determined from data acquired using an imaging device like a tomographic scanner or a microscope.
	The prime example of this is tomographic imaging in medicine which has revolutionized health care over the past 30 years, allowing doctors to find disease earlier and improve patient outcomes \cite{Deuflhard:2010aa,Rubin:2014aa}.
	Likewise, scientific computing is nowadays considered to be the ``third pillar of science'' standing right next to theoretical analysis and experiments for scientific discovery, much thanks to possibilities for simulating and optimizing complex physical and engineering systems.
	A key element in realizing this role is the ability to solve the inverse problem of calibrating parameters in a mathematical model of the system so that simulations match benchmark data \cite{Biegler:2011aa}.
	
	The inverse problem of reconstructing the model parameter from data is often only one out of many steps in a procedure where the recovered model parameter is used in decision making.
	The reconstructed model parameter is typically summarized, either by an expert or automatically, and resulting task dependent descriptors are then used as basis for decision making, see \cref{fig:Pipeline}. 
	
	Clearly, there are several disadvantages with performing the various parts of the above pipeline independently from each other. 
	Each single step is prone to introduce approximations that are not accounted for by subsequent steps, the reconstruction may not consider the end task, and the feature extraction may not consider measured data.
	In fact, the task is almost always only accounted for at the very final step.
	It is therefore natural to ask whether one may adapt the reconstruction method for the specific task at hand. 
	\emph{Task adapted reconstruction} refers to methods that integrate the reconstruction procedure with (parts of) the decision making procedure associated with the task. 
	This is sometimes also referred to as ``end-to-end'' reconstruction. 
	\begin{figure}[t]
		\centering
		\tikzstyle{smoothblock} = [rectangle, draw,  
		text width=6.7em, text centered, rounded corners, minimum height=4em]
		\tikzstyle{block} = [rectangle, draw,  
		text width=6.7em, text centered, minimum height=4em]
		\tikzstyle{line} = [draw, -latex']
		\begin{tikzpicture}[node distance = 0.23\linewidth, auto,scale=0.975, transform shape]
		\node [smoothblock] (sample) {Sample};    
		\node [block, below of=sample, node distance = 0.12\textheight] (preparation) {Sample preparation};    
		\node [block, right of=preparation, node distance = 0.35\linewidth] (acquisition) {Data \\ acquisition};    
		\node [block, right of=acquisition, node distance = 0.35\linewidth] (preprocess) {Data \\ pre-processing};
		\node [block, below of=preparation,node distance = 0.17\textheight] (reconstruction) {Reconstruction};
		\node [block, right of=reconstruction, node distance = 0.35\linewidth] (feature) {Feature extraction};
		\node [block, right of=feature, node distance = 0.35\linewidth] (model building) {Model building};
		\node [smoothblock, below of=model building, node distance = 0.12\textheight] (model) {Task adapted \\ model};
		\path [line] (sample) -- node[align=left] {} (preparation);
		\path [line] (preparation) -- node[align=left] {} (acquisition);
		\path [line] (acquisition) -- node[align=left] {Raw \\ data} (preprocess);
		\draw [->,postaction={decorate,decoration={text along path,raise={1.0ex},reverse path,
				text align=center,text={Clean data}}}] (preprocess) to [out=270,in=90,looseness=0.45] (reconstruction);   
		\path [line] (reconstruction) -- node[above,align=left] {Model \\ parameter} (feature);
		\path [line] (feature) -- node[above,align=left] {Extracted \\ features} (model building);
		\path [line] (model building) -- node[align=left] {} (model);
		\node (boxed) [draw=red, fit= (reconstruction) (feature) (model building), inner sep=0.295cm, 
		dashed, ultra thick, fill=red!20, fill opacity=0.2] {};
		\end{tikzpicture}
		\caption{Typical workflow involving an inverse problem. The second row represents the data acquisition where raw data is acquired and pre-processed, 
			resulting in cleaned data. In the third row, the cleaned data is used as input to a reconstruction step that recovers the model parameter, which is then 
			post-processed to extract features that are used as input for model building. 
			The final outcome is a task adapted model that can be used for decision making.
			The dotted rectangular part outlines the steps that are unified by task adapted reconstruction.}\label{fig:Pipeline}
	\end{figure}
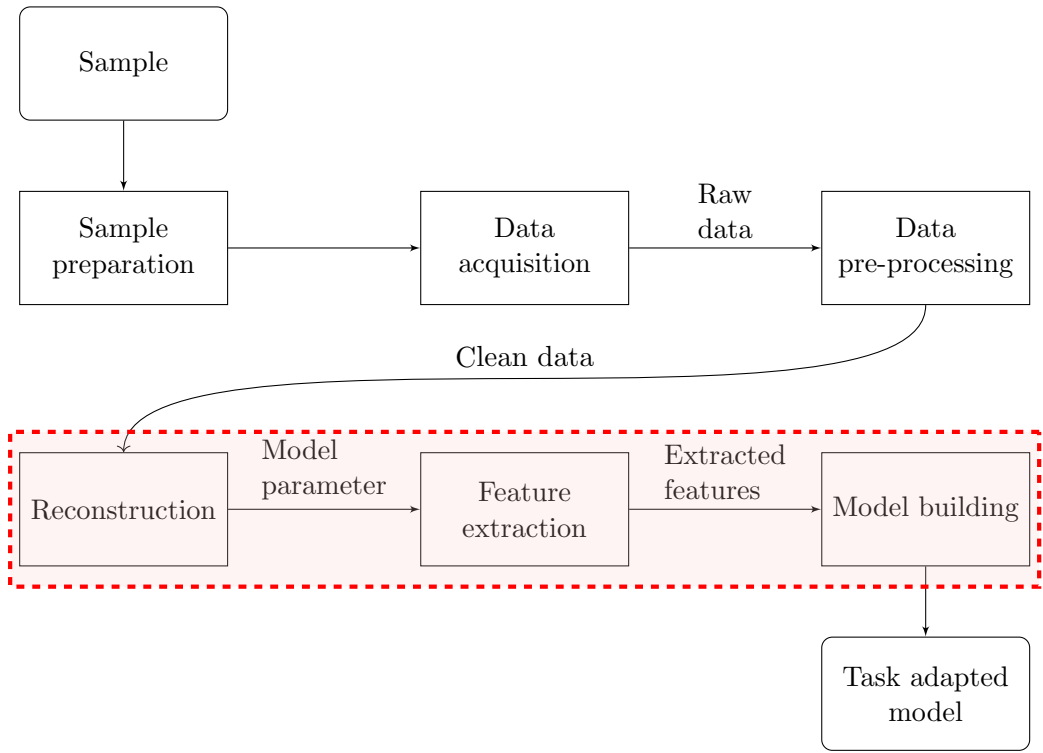
	
	\section{Overview}
	We start with a brief survey of existing approaches to task adapted reconstruction in the context of tomographic image reconstruction (\Cref{sec:Overview}), which also points out the drawbacks that come with these approaches.
	The section that follows (\cref{sec:InvProb}) introduces the statistical view on inverse problems. 
	More specifically, we consider Bayesian inversion (\cref{sec:BayesianInversion}) in which a reconstruction method is a statistical estimator (\cref{sec:RecoDecisionRule}).
	After pointing out some key challenges associated with Bayesian inversion (\cref{sec:TheoryStatReg}), we introduce the learned iterative methods (\cref{sec:LIR}) that are later used in our applications of task adapted reconstruction (\cref{sec:Applications}).
	We then switch gears and consider tasks on the model parameter space that can be formulated as a statistical estimation problem (\cref{sec:TaskModelParam}). 
	The first step is to provide an abstract framework (\cref{sec:AbstractTask}) with a plug-and-play structure for adapting to a specifik task.
	To further illustrate the wide applicability of this framework, \cref{sec:ExampleTasks} describes a number of tasks that are worked out in detail followed by further examples in \cref{sec:OtherTasks}.
	
	\Cref{sec:TaskAdaptedRec} introduces task adapted reconstruction in an abstract setting (\cref{sec:TaskAdaptedAbstract}).
	It assumes that both the reconstruction (\cref{sec:RecoDecisionRule}) and task (\cref{sec:AbstractTask}) are given by appropriate decision rules. 
	An important part is the computational implementation (\cref{sec:TaskCompImpl}) that is based on neural networks.  
	This is followed by two applications that are worked out in detail in \cref{sec:Applications}.
	In \cref{sec:TheoryConsider} we provide some theoretical considerations regarding regularizing properties and the potential advantage that comes with using a joint approach.
	The final section (\cref{sec:Discussion}) contains a discussion and outlook on future research in this area.
	
	\section{Specific contributions}
	The paper offers a generic, yet highly adaptable, framework for task adapted reconstruction that is based on considering both the reconstruction and the task as statistical estimation problems.
	The implementation uses neural networks for both these steps, which is essential for both performance in terms of quality and computational feasibility as shown in the example for joint tomographic reconstruction and segmentation.
	Both networks are trained jointly using a joint loss function \cref{eq:JointLoss} that ``interpolates'' between sequential and end-to-end approaches. 
	Here, sequential refers to a setting where the neural network for reconstruction is trained separately and its output is used in the training of the neural network for the task. End-to-end is when a neural network for the task is trained directly against data without explicitly introducing a reconstruction step.
	
	To the best of our knowledge, this is the first paper that offers an approach to task adapted reconstruction that unifies reconstruction with such a diverse set of tasks in a computationally feasible manner under the guiding principles of statistical decision theory, learning, and efficient inference algorithms.
	This allows for re-using algorithmic components thereby opening up new ways of thinking about machine learning and inverse problems that may ultimately lead to deeper understanding of the possibilities for integrating elements of decision making into the reconstruction.
	Furthermore, introducing the joint loss in \cref{eq:JointLoss} and investigating its properties (\cref{sec:TheoryConsider}) are novel contributions. 
	Our work also leaves many open questions and future research directions for the inverse problems and machine learning communities as outlined in \cref{sec:Discussion}.
	
	\section{Survey of task adapted tomographic reconstruction}\label{sec:Overview}
	There is an ongoing effort within the inverse problems community to include signal processing steps associated with performing a task jointly with the reconstruction step. 
	In tomographic imaging, most of the tasks considered this far correspond to feature extraction, e.g., segmentation or extraction of features expressed through sparse representations as in compressed sensing.
	In such case, task adapted reconstruction reduces to joint reconstruction and feature extraction. 
	
	Current approaches to task adapted reconstruction are primarily based one the classical approach to inverse problems.
	In this setting, the problem is to recover the true unknown feature $\decisiontrue \in \DecisionSpace$ from data $\data \in \DataSpace$ by solving an operator equation:  
	\begin{equation}\label{eq:InvProbOpEq}
		\data = \ForwardOp(\signaltrue) + \datanoise
		\quad\text{and}\quad
		\decisiontrue = \TaskOp{}(\signaltrue).
	\end{equation}
	The forward operator $\ForwardOp \colon \RecSpace \to \DataSpace$ models how a model parameter (image in tomographic imaging) gives rise to data and the task operator $\TaskOp{} \colon \RecSpace \to \DecisionSpace$ represents the feature extraction. In the above, both these are assumed to be known. 
	Likewise, $\datanoise$ is generated by a $\DataSpace$-valued random variable $\stdatanoise \sim \ProbNoise$ with known distribution, so task adapted reconstruction reduces to finding the dotted operator in \cref{eq:FeatureRec}. 
	\begin{equation}\label{eq:FeatureRec}
		\begin{tikzcd}[column sep=large,row sep=large,ampersand replacement=\&]
			\RecSpace \arrow[r, "\ForwardOp"] \arrow[swap, d, "\TaskOp{}"] \& 
			\DataSpace \arrow[ld, dotted] \\
			\DecisionSpace \&
		\end{tikzcd}
	\end{equation}
	Note that the task operator $\TaskOp{} \colon \RecSpace \to \DecisionSpace$ is often highly non-injective, so it makes no sense to consider $\ForwardOp \circ \TaskOp{}^{-1} \colon \DecisionSpace \to \DataSpace$ as ``new forward operator''.
	
	Approaches based on ``solving'' \cref{eq:InvProbOpEq} heavily depend on the nature of the $\TaskOp{}$ and below is a brief list of prior work in the context of tomographic imaging.
	\begin{description}
		\item[Edge recovery:]
		Lambda-tomography \cite{Krishnan:2015aa} is a non-iterative method that recovers edges directly from noisy tomographic data using the canonical relation from microlocal analysis.
		Another non-iterative approach combines the method of approximate inverse with an explicit task operator, e.g., a Canny edge detector \cite{Louis:2011aa}.
		Finally, it is also possible to use a variational approach with suitable regularizer. 
		Examples relevant for edge recovery are variants of \ac{TV} \cite{Burger:2013aa,Benning:2018aa} or sparsity promoting $\ell_1$-type of regularizers with an underlying dictionary that is specifically designed to sparsely represent edges, like curvelets, shearlets, beamlets, and bandlets \cite{Foucart:2013aa,Rubinstein:2010aa,Kutyniok:2012aa}.
		\item[Segmentation:]
		Methods for joint reconstruction and segmentation is an active area of research. 
		Most approaches are based on a variational scheme with suitably chosen regularizers and control variables.
		One example is usage of Mumford-Shah penalty \cite{Ramlau:2007aa,Hohm:2015aa}, another is based on level set approaches \cite{Yoon:2010aa}.
		A further refinement is to consider semantic segmentation. 
		Here, a variational scheme that amounts to computing a \ac{MAP} estimator with a Gauss-Markov-Potts type of prior shows promising results on small-scale examples \cite{Mohammad-Djafari:2009aa,Romanov:2016aa}.
		\item[Image registration:]
		To register a template against an indirectly observed target (indirect image registration) is a key step for reconstruction in spatiotemporal imaging. 
		The (temporal) deformation can be modeled using optical flow \cite{Burger:2017aa} or diffeomorphic deformations \cite{Chen:2018aa,Gris:2018aa}. 
		Yet another approach is to consider  optimal transport \cite{Karlsson:2017aa}.
	\end{description}
	
	Approaches to task adapted reconstruction that are based on solving  \cref{eq:InvProbOpEq} suffer from two issues that seriously limit their usefulness in practical imaging applications.
	The first is the requirement for an explicit handcrafted task operator $\TaskOp{} \colon \RecSpace \to \DecisionSpace$.
	More advanced tasks, like many mentioned in \cref{sec:ExampleTasks,sec:OtherTasks}, are difficult to encode in this way and available examples mostly consider task operators that extract ``simple'' features that must be further processed before they can be used for decision making.
	This is also the reason for why the term ``feature reconstruction'' \cite{Louis:2011aa} is used as a proxy for task adapted reconstruction.
	The second issue relates to computational feasibility.  
	Evaluating the task operator, like in segmentation and image registration, is computationally demanding and requires setting values for extra (nuisance) parameters. 
	Furthermore, most state-of-the-art approaches for solving \cref{eq:InvProbOpEq} are based on variational methods, which quickly become computationally unfeasible for large scale imaging problems.
	
	Many complex tasks have been successfully addressed using techniques from machine learning, so it makes sense to investigate whether such techniques can be integrated with reconstruction for task adapted reconstruction. 
	One example is given in \cite{Wu:2017} for abnormality (tumor) detection in low-dose \ac{CT} imaging. 
	The idea here is to jointly train a learned iterative scheme for reconstruction \cite{Adler:2018aa} with a 3D convolutional neural network for detecting the abnormality in the reconstructed images.
	Another examples introduces a unified deep neural network architecture (SegNetMRI) for combined Fourier inversion (MRI image reconstruction) and segmentation \cite{Sun:2018ab}. 
	Here, one has two neural networks with the same encoder-decoder structure, one for MRI reconstruction consisting of multiple cascaded blocks, each containing an encoder-decoder unit and a data fidelity unit, and the other for segmentation. 
	These are pre-trained and coupled by ensuring they share reconstruction encoders. 
	These two examples are special cases of the generic approach develop in \cref{sec:TaskAdaptedRec}.
	
	\section{Statistical inverse problems}\label{sec:InvProb}
	Let $\RecSpace$ and $\DataSpace$ denote separable Banach spaces where  $(\RecSpace,\RecSpaceSAlg)$ and $(\DataSpace,\DataSpaceSAlg)$ are measurable spaces. Next, let  $\PClass{\RecSpace}$ and $\PClass{\DataSpace}$ denote spaces of probability measures on $\RecSpace$ and $\DataSpace$, respectively.
	Following \cite{Evans:2002aa}, a \emph{(statistical) inverse problem} amounts to reconstructing (estimating) $\signaltrue \in \RecSpace$ from measured data $\data \in \DataSpace$ that is generated by a $\DataSpace$-valued random variable $\stdata$ where 
	\begin{equation}\label{eq:DataModel}
		\stdata \sim \DataModel(\signaltrue)
		\quad\text{with known $\DataModel \colon \RecSpace \to \PClass{\DataSpace}$ (data model).}
	\end{equation}
	Elements in $\RecSpace$ (model parameter space) represent possible model parameters and elements in $\DataSpace$ (data space) represent possible data. 
	In tomographic imaging, elements in $\RecSpace$ are often functions defined on a fixed domain in $\Real^n$ representing images and elements in $\DataSpace$ are real-valued functions defined on a fixed manifold $\datamanifold$, which is given by the acquisition geometry associated with the measurements.
	Furthermore, just as in the classical setting, most statistical inverse problems do not have a unique solution in the sense that the model parameter is not \emph{identifiable} \cite[section~2.3]{Evans:2002aa}.

	A common data model is when data is contaminated with additive noise:
	\begin{equation}\label{eq:ForwardOpRel}
		\stdata = \ForwardOp(\signaltrue) + \stdatanoise
		\quad\text{with $\stdatanoise \sim \ProbNoise$ for some known $\ProbNoise \in \PClass{\DataSpace}$.}
	\end{equation}
	Here, $\ForwardOp \colon \RecSpace \to \DataSpace$ (forward operator) models how data is generated in absence of noise and $\stdatanoise \sim \ProbNoise$ models noise. 
	If $\stdatanoise$ is independent from $\signaltrue$, then \cref{eq:ForwardOpRel} amounts to the data model 
	\[ \DataModel(\signal) = \delta_{\ForwardOp(\signal)} \circledast \ProbNoise = \ProbNoise\bigl( \Cdot - \ForwardOp(\signal) \bigr)
	\quad\text{for any $\signal \in \RecSpace$.} 
	\]
	Another data model is when $\DataModel(\signal)$ is a Poisson random measure on $\DataSpace$ with mean $\ForwardOp(\signal)$. This is a suitable data model for imaging modalities that rely on counting statistics in a low-dose setting, such as line of response PET \cite{Kadrmas:2004aa} \cite[section~3.2]{Natterer:2001ab} and variants of fluorescence microscopy \cite{Hell:2007aa,Diaspro:2007aa}, see also \cite{Hohage:2016,Streit:2010aa} for a more abstract treatment. 
	
	\subsection{Bayesian inversion}\label{sec:BayesianInversion}
	Only seeking an estimate of $\signaltrue \in \RecSpace$ is limiting since it does not account for the uncertainty.
	A more comprehensive analysis is based on introducing a $\RecSpace$-valued random variable $\stsignal \sim \ProbSignalTrue$ whose true (unknown) probability distribution $\ProbSignalTrue \in \PClass{\RecSpace}$ generates $\signaltrue$. 
	One can then rephrase the inverse problem stated earlier as the task of recovering the probability measure $\ProbSignalTrue \in \PClass{\RecSpace}$ given data $\data \in \DataSpace$ generated by $\stdata$, which is related to $\signaltrue$ through the data model as in \cref{eq:DataModel}.
	An important special case is when $\ProbSignalTrue$ is parametrized by $\signaltrue \in \RecSpace$ in a known way, so the inverse problem reduces to the task of recovering $\signaltrue \in \RecSpace$.
	
	In a Bayesian setting, one considers the posterior distribution of $\stsignal$ given $\stdata=\data$ up to a constant of proportionality. 
	More precisely, consider a setting where the joint law $(\stsignal,\stdata) \sim \JointLawSignalData$ can be written in terms of conditional probabilities:
	\begin{equation}\label{eq:BayesianSetting}
		\JointLawSignalData = \SignalPrior(\signaltrue) \otimes \CondLaw( \stdata \mid \stsignal=\signaltrue) = \SignalPrior(\signaltrue) \otimes \DataModel(\signaltrue).
	\end{equation} 
	Here, $\SignalPrior$ serves as a (possibly improper) prior and the last equality in \cref{eq:BayesianSetting} follows from the definition of the data model as the conditional distribution of $\stdata$ given $\stsignal = \signaltrue$.
	In particular, the joint law $\JointLawSignalData$ in \cref{eq:BayesianSetting} is proportional to the posterior, so the decomposition above exists as soon as Bayes' theorem holds.
	This is the case in a rather general setting \cite[Theorem~14]{Dashti:2016aa}, but a decomposition is also possible is some cases where the prior is not proper\footnote{Under certain circumstances it is possible to work with improper priors on the model parameter space, e.g., by computing posterior distributions that approximate the posteriors one would have obtained using proper conjugate priors whose extreme values coincide with the improper prior.}. 

	A key point in the Bayesian setting is to explore the posterior distribution of $\stsignal$ given $\stdata=\data$ assuming that \emph{both} $\signal \mapsto \SignalPrior \in \PClass{\RecSpace}$ (prior) and $\signal \mapsto \DataModel(\signal)$ (data model) are known, but $\signaltrue \in \RecSpace$ is unknown.
	The data model often has an associate density $\DataLikelihood$ (data likelihood) that is known to sufficient degree of accuracy, in which case $\dint \DataModel(\signal)(\data) = \DataLikelihood(\data \mid \signal) \dint\data$.

	\subsection{Reconstruction as an optimal decision rule}\label{sec:RecoDecisionRule}
	A reconstruction method is formally a measurable $\RecSpace$-valued mapping on $\DataSpace$, which in the statistical setting corresponds to an estimator.
	More precisely, $\bigl( (\DataSpace,\DataSpaceSAlg),\{ \DataModel(\signal) \}_{\signal \in \RecSpace} \bigr)$ defines a statistical model parametrized by the model parameter space $\RecSpace$ and a reconstruction method corresponds to a point estimator.
	The latter is a non-randomized decision rule for a statistical estimation problem where the model parameter space $\RecSpace$ parametrizes the underlying statistical model and at the same time constitutes the decision space.
	The reader may here consult \cite[section~3.1]{Liese:2008aa} for formal definitions of decision theoretic notions used here.
	
	There are many possible reconstruction methods (estimators) so one needs a framework where these can be compared against each other. 
	Statistical decision theory offers such a framework by associating a notion of risk to a decision rule.
	This quantifies the downside that comes with using a particular reconstruction method.
	The first step is to define the \emph{loss function} on the decision space, which in our specific setting becomes a measurable mapping (see \cite[Definition~3.2]{Liese:2008aa} for the definition in a general setting):
	\begin{equation}\label{eq:RecLoss}
		\SignalLoss \colon \RecSpace \times \RecSpace \to \Real.
	\end{equation}
	A common choice in imaging inverse problems is the $\Lebesgue^2$-loss, which is the squared $\Lebesgue^2$-distance. 
	There are however alternatives that are not based on point-wise differences but on differences between high-level image features, e.g., the Wasserstein distance \cite{Adler:2017ab} and perceptual losses \cite{Johnson:2016aa}.
	
	Having selected a loss function as in \cref{eq:RecLoss} and a prior $\SignalPrior$ in \cref{eq:BayesianSetting} on the model parameter space, the $\SignalPrior$-average risk (Bayes risk or expected loss) for reconstruction is given as 
	\begin{equation}\label{eq:BayesRiskReco}
		\AverageRisk_{\SignalPrior}(\RecOp{}) 
		= \Expect_{\SignalPrior \otimes \DataModel(\signal)}\Bigl[ \SignalLoss\bigl(\stsignal,\RecOp{}(\stdata) \bigr) \Bigr].
	\end{equation}
	A natural criteria to select a reconstruction method (estimator) is to minimize Bayes risk, i.e., to select an estimator (non-randomized decision rule) that minimizes  $\RecOp{} \mapsto \AverageRisk_{\SignalPrior}(\RecOp{})$ in \cref{eq:BayesRiskReco}.
	
	Note here that in the finite dimensional setting, minimizing Bayes risk is the same as computing the conditional mean (posterior mean) if and only if the loss function in \cref{eq:RecLoss} is the Bregman distance of a strictly convex non-negative differentiable functional \cite{Banerjee:2005aa}.
	This holds in particular when the loss function is given by the squared $\Lebesgue^2$-norm. 
	Next, another common choice is the \ac{MAP} estimator that maximizes the posterior, so it corresponds to the most likely reconstruction given the data.
	On the other hand, a maximum likelihood estimator maximizes the negative log-likelihood of data, i.e., it corresponds to the model parameter that generates the most likely data. This is an unsuitable estimator in ill-posed inverse problems since it frequently leads to overfitting. 

	To summarize, we will henceforth consider a reconstruction method that minimizes Bayes risk and, as already mentioned, this equals the conditional mean when using a $\Lebesgue^2$-loss. 
	
	\subsection{Challenges with Bayesian inversion}\label{sec:TheoryStatReg}
	In the Bayesian setting (\cref{sec:BayesianInversion}), both the true model parameter and data are assumed to be generated by random variables, and the goal is to recover the conditional probability of the model parameter given data (posterior) \cite{Kaipio:2005aa,Evans:2002aa,Stuart:2010aa,Dashti:2016aa,Calvetti:2017aa}.
	In contrast, classical (deterministic) approaches view an inverse problem as an operator equation of the type \cref{eq:InvProbOpEq} \cite{Engl:2000aa,Kaltenbacher:2008aa,Scherzer:2009aa,Kirsch:2011aa} where data may be generated by a random variable, but there are no statistical assumptions on model parameters.
	
	The Bayesian viewpoint offers a more complete analysis than the classical approach that is based on solving \cref{eq:InvProbOpEq} in the sense that the posterior describes all possible solutions.
	In particular, different reconstructions can be obtained by using different estimators and there is a natural framework for uncertainty quantification, e.g., by computing Bayesian credible sets.
	Furthermore, small changes in the data lead to small changes in the posterior distribution in a fairly general setting \cite[Theorem~16]{Dashti:2016aa} (continuity of the posterior distribution in the Hellinger metric), so working with probability measures on the model parameter space (posterior) and adopting a suitable prior stabilizes an ill-posed inverse problem.  
	
	The posterior is, on the other hand, often quite complicated with no closed form expression.
	Much of the contemporary research therefore focuses on realizing the above advantages with Bayesian inversion without having access to the full posterior.
	Key topics are designing a ``good'' prior $\SignalPrior \in \PClass{\RecSpace}$ and to have computationally feasible means for exploring the posterior.
	
	\subsubsection{Designing good priors}\label{sec:GoodPrior}
	The difficulty in selecting an appropriate prior lies in capturing the relevant a priori information. 
	Many of the results from the statistical community focus on characterizing priors that lead to Bayesian inference methods with desirable asymptotic properties, like consistency and good contraction rates.
	
	Bayesian non-parametric theory provides a large class of handcrafted priors, see, e.g., \cite[chapter~2]{Ghosal:2017aa}, \cite[section~2]{Dashti:2016aa}, and \cite{Kaipio:2005aa,Calvetti:2017aa}.
	These however only capture a fraction of the a priori information that is available.
	To illustrate this claim, a natural a priori information in medical imaging is that the object being imaged is a human being. 
	It is very difficult, if not impossible, to explicitly construct a prior that encodes this information. 
	
	An alternative approach is to consider a prior that is learned from examples in $\RecSpace$ through some predictive generative model. 
	A simplistic way is to select a Gaussian density that matches the first two sample moments \cite{Calvetti:2005aa}.
	More elaborate approaches can be based on generative adversarial networks that are trained on unsupervised data, e.g., a generative adversarial network can be used to learn a Gibbs type of prior in a \ac{MAP} estimator \cite{Lunz:2018aa}.

	\subsubsection{Computational feasibility}\label{sec:Computation}
	Exploring the posterior requires sampling from a high dimensional probability distribution.
	It is not possible to directly simulate from the posterior distribution in the infinite dimensional setting unless the model parameter is decomposed into more elementary finite-dimensional components. 
	This quickly becomes computationally challenging in large scale problems, like in imaging where the posterior is a probability distribution over the set of images.
	
	Computational methods used for Bayesian inversion often combine analytic approximations of the posterior with various \ac{MCMC} techniques, see \cite[section~5]{Dashti:2016aa} for a nice survey.
	There is an extensive theory that guarantees that these techniques are statistically consistent, but it comes with two critical drawbacks that has prevented   widespread usage of \ac{MCMC} techniques in imaging.
	First, many approaches require access to the prior in closed form, and as already argued for (\cref{sec:GoodPrior}), such handcrafted priors are woefully inadequate in representing natural images. 
	Second, these methods are still not sufficiently scalable for exploring the posterior in an efficient manner in large scale inverse problems, such as those that arise in 2D/3D tomographic imaging \cite[chapter~1]{Biegler:2011aa}.
	Alternatively, one can approximate the posterior with more tractable distributions (deterministic inference), which includes variational Bayes \cite{Fox:2012aa} and expectation propagation \cite{Minka:2001aa}.
	Variational Bayes methods have in particular emerged as a popular alternative to the classical \ac{MCMC} methods, see \cite{Blei:2017aa} for some guidance (on p.~860) on when to use \ac{MCMC} or variational Bayes.
	
	To summarise, one can sometimes with reasonable efficiency compute point estimators that do not involve any integration over the model parameter space, like a \ac{MAP} estimator.
	Estimators requiring such integration, like the estimator that minimize Bayes risk, are however computationally unfeasible. This also includes computational steps relevant for uncertainty quantification.
	
	\subsection{Learned iterative methods}\label{sec:LIR}
	As outlined in \cref{sec:TheoryStatReg}, there are two challenges associated with using Bayesian inversion: selecting a ``good'' prior (\cref{sec:GoodPrior}) and providing a computationally feasible approach for computing suitable estimators, like the one that minimizes Bayes risk (\cref{sec:Computation}).
	
	As we outline here, \emph{learned iterative methods address both these challenges}.
	It makes use of techniques from machine learning, and deep neural networks in particular, which have demonstrated a remarkable capacity in capturing intricate relations from example data \cite{LeCun:2015aa}.
	A key element is usage of highly parametrized generic models that can be adapted to specific decision rules, such as reconstruction by \cref{eq:BayesRiskReco}, by training against example data. 
	Learned iterative methods use a deep neural network to define an estimator (reconstruction method) that minimizes Bayes risk while accounting for the knowledge about how data is generated. 
	
	To give a more precise description, consider the joint law $\JointLawSignalData = \SignalPrior \otimes \DataModel(\signal)$ in \cref{eq:BayesRiskReco} used for defining Bayes risk. 
	In most practical applications, this joint law is unknown.
	Often one may however have access to the corresponding empirical measure given by supervised training data $(\signal_1,\data_1), \ldots, (\signal_m,\data_m) \in \RecSpace \times \DataSpace$ generated by $(\stsignal,\stdata) \sim \JointLawSignalData$.
	This \emph{avoids introducing a handcrafted prior} $\SignalPrior \in \PClass{\RecSpace}$.
	Furthermore, searching over all non-randomized decision rules is computationally unfeasible. 
	Instead, we restrict our attention to those given by a (deep) neural network architecture, which are known to have large capacity (can approximate any Borel measurable mapping arbitrarily well \cite{Pinkus:1999aa}) and there  are computationally feasible implementations.
	To summarize, we have a family of reconstruction methods $\RecOp{\MLsignalparam} \colon \DataSpace \to \RecSpace$ parametrized by a finite dimensional parameter set $\MLSignalParamSet$ and the optimal one is given by solving the training problem  
	\begin{equation}\label{eq:TrainRecLoss}
		\MLsignalparam^* \in 
		\argmin_{\MLsignalparam\in \MLSignalParamSet} \Bigl\{ \frac{1}{m} \sum_{i=1}^m \SignalLoss\bigl(\signal_i,\RecOp{\MLsignalparam}(\data_i)\bigr) \Bigr\}.
	\end{equation}
	
	The above approach for defining a reconstruction operator $\RecOp{\MLsignalparam^*} \colon \DataSpace \to \RecSpace$ is fully data driven in the sense that neither a prior on model parameter space nor a data model are handcrafted beforehand.
	Instead, all information is derived from the training data, which in particular does not utilize knowledge about how data is generated.
	This becomes a serious issue when the number of independent samples in training data are low compared to number of unknowns, which is commonly the case in imaging. 
	Next, in many inverse problem the data model $\signal \mapsto \DataModel(\signal)$ that describes how data is generated is known.
	Thus, it is unnecessarily pessimistic to disregard this information as in a fully data driven approach to reconstruction. 
	
	\emph{Learned iterative schemes} \cite{Adler:2017aa,Adler:2018aa} define a non-linear reconstruction operator parametrized by a deep convolutional neural network architecture that accounts for the data model, or more precisely the data likelihood. 
	The idea is to unroll a fixed point iterative scheme relevant for solving the inverse problem and replace the explicit iterative updating rule with a learned one given by a deep convolutional residual network.
	The approach can be formulated as a general scheme for solving (possibly non-linear) inverse problems \cite{Adler:2017aa,Adler:2018aa,Gupta:2018aa}, see also \cite{Mardani:2017aa,Mardani:2018aa,Diamond:2017aa} for a formulation that learns proximal updates in linear inverse problems.
	This results in a computationally feasible approach with surprisingly low requirements on training data and good generalization properties that outperforms state-of-the-art image reconstruction in \ac{CT} \cite{Adler:2017aa,Adler:2018aa,Gupta:2018aa}, \ac{MRI} \cite{Mardani:2017ab,Mardani:2017aa,Hammernik:2018aa,Mardani:2018aa}, photoacoustic tomography \cite{Hauptmann:2018aa}, and superresolution \cite{Mardani:2017ab,Mardani:2017aa,Diamond:2017aa}.
	
	\section{Tasks on model parameters}\label{sec:TaskModelParam}
	We consider tasks formulated as an operator that acts on model parameter space $\RecSpace$ and that takes values in a set $\DecisionSpace$ (decision space).
	We will start with the abstract formalization of such tasks using the language of statistical decision theory. 
	Similar to how reconstruction was treated (\cref{sec:RecoDecisionRule}), the task is represented by a non-randomized decision rule and we will select the one that minimizes Bayes risk. 
	Next, we also indicate how such decision rules can be computed efficiently using (deep) neural networks and supervised learning that minimizes the empirical risk.
	The remainder of the section is devoted to providing examples that concretizes the abstract framework and illustrates its general applicability.
	
	\subsection{Abstract setting}\label{sec:AbstractTask}
	Let $\bigl( (\RecSpace,\RecSpaceSAlg), \{ \TaskModel_{\task} \}_{\task \in \TaskSpace} \bigr)$ be a statistical model where the model parameter space $(\RecSpace,\RecSpaceSAlg)$ is a measurable space and $\{ \TaskModel_{\task} \}_{\task \in \TaskSpace} \subset \PClass{\RecSpace}$ is some family of probability measures on $\RecSpace$ parametrized by elements in $\TaskSpace$.
	Next, there is a measurable space $(\DecisionSpace,\DecisionSpaceSAlg)$ (decision space) and a fixed (task adapted) loss function (\cite[Definition~3.2]{Liese:2008aa})
	\begin{equation}\label{eq:TaskLoss}
		\TaskLoss \colon \TaskSpace \times \DecisionSpace \to \Real
		\quad\text{where}\quad
		\TaskLoss(\task,\decision) := \DecisionLoss\bigl(\featuremap(\task),\decision\bigr) 
	\end{equation}
	with given $\featuremap \colon \TaskSpace \to\DecisionSpace$ and $\DecisionLoss \colon \DecisionSpace \times \DecisionSpace \to \Real$ (decision distance).
	The statistical model along with the decision space and loss function defines a statistical estimation problem.
	Many tasks can now be seen as an appropriate non-randomized decision rule $\TaskOp{} \colon \RecSpace \to \DecisionSpace$ (task operator).
	
	Before proceeding, it is worth reflecting over the roles of the above sets. 
	In our set-up, the decision making associated with the task is based on elements in the decision space $\DecisionSpace$ whereas actual observables are elements in $\RecSpace$, so the task is represented by a measurable mapping $\TaskOp{} \colon \RecSpace \to \DecisionSpace$ (task operator).
	Often it is more natural to formalize the task as a mapping $\featuremap \colon \TaskSpace \to\DecisionSpace$ where elements in the set $\TaskSpace$ are related to those in $\RecSpace$.
	A difficulty is that elements in $\TaskSpace$ are not observable and the mapping relating its elements to those in $\RecSpace$ is unknown. 
	Hence, the challenge is to infer an appropriate mapping $\TaskOp{}$ given $\featuremap$ by resorting to some suitable ``optimality'' principle. 
	The examples in \cref{sec:ExampleTasks} will further clarify the various roles of these sets in decision making. 
	
	Just as in \cref{sec:RecoDecisionRule}, we consider a decision rule that minimizes Bayes risk.
	More precisely, assume $\TaskSpace$ is itself a measurable space and consider a fixed probability measure $\TaskPrior \in \PClass{\TaskSpace}$ (task prior).
	The \emph{task operator} is the non-randomized decision rule $\TaskOp{} \colon \RecSpace \to \DecisionSpace$ that minimizes the associated Bayes risk:
	\begin{equation}\label{eq:TaskAverageRisk}
		\AverageRisk_{\TaskPrior}(\TaskOp{}) 
		:= \Expect_{\TaskPrior \otimes \TaskModel_{\task}}\Bigl[ \DecisionLoss\bigl(\featuremap(\sttask),\TaskOp{}(\stsignal) \bigr) \Bigr]
		\quad\text{where $(\sttask, \stsignal) \sim \TaskPrior \otimes \TaskModel_{\task}$.}
	\end{equation}
	A difficulty is to provide a `reasonable' task prior $\TaskPrior \in \PClass{\TaskSpace}$.
	Another is that $\TaskModel_{\task} \in \PClass{\RecSpace}$ is not known.
	Hence, one needs to consider the joint law $\JointLawTaskSignal := \TaskPrior \otimes \TaskModel_{\task}$ in \cref{eq:TaskAverageRisk} as an unknown.
	Note that this differs from reconstruction, where the joint law is either known (as in \cref{sec:BayesianInversion}), or the prior is unknown but the data likelihood is known (as in \cref{sec:LIR}).
	Since the joint law is unknown, we replace it by the empirical measure given by (supervised) training data $(\task_1,\signal_1), \ldots, (\task_m,\signal_m) \in \TaskSpace \times \RecSpace$, i.e., one has i.i.d. samples generated by a $(\TaskSpace \times \RecSpace)$-valued random variable $(\sttask,\stsignal) \sim \JointLawTaskSignal$.
	Furthermore, due to issues associated with computational feasibility (\cref{sec:Computation}), we consider a parametrized family of decision rules $\TaskOp{\MLtaskparam} \colon \RecSpace \to \DecisionSpace$ given by a (deep) neural network architecture.
	Then, the task operator is the decision rule $\TaskOp{\MLtaskparam^*} \colon  \RecSpace \to \DecisionSpace$ parametrized by a finite dimensional parameter in $\MLTaskParamSet$ and the optimal one $\MLtaskparam^* \in \MLTaskParamSet$ is given by \emph{empirical risk minimization}:
	\begin{equation}\label{eq:TrainTaskLoss}
		\MLtaskparam^* \in 
		\argmin_{\MLtaskparam\in \MLTaskParamSet} \Bigl\{ \frac{1}{m} \sum_{i=1}^m \DecisionLoss\bigl( \featuremap(\task_i),\TaskOp{\MLtaskparam}(\signal_i) \bigr) \Bigr\}.
	\end{equation}
	
	We conclude with examples showing how a wide range of image processing tasks can be phrased as decision rules in a statistical decision problem.
	
	\subsection{Examples}\label{sec:ExampleTasks}
	The abstract framework in \cref{sec:AbstractTask} for formalizing a task on model parameter space is very generic and covers a wide range of possible tasks.
	In the following, we list concrete examples from imaging in order to show how this framework can be adapted to specific cases.
	To ensure a computational feasible implementation, our focus is on tasks that have been successfully addressed using techniques from deep learning.
	Deep learning has proven to be an efficient computational framework for many tasks, much thanks to its ability to progressively learn discriminative hierarchal features of the input data by means of training a suitable deep neural network.
	Hence, this limitation is not as restrictive as it may seem at a first glance, which will also become evident by the examples listed here and in \cref{sec:OtherTasks}.
	
	Unless otherwise stated, tasks are formulated for grey-scale images defined on a fixed domain $\signaldomain \subset \Real^n$, i.e., $\RecSpace := \Lebesgue^2(\signaldomain,\Real)$. 
	We will also assume that $\RecSpace$ is a measurable space for some $\sigma$-algebra $\RecSpaceSAlg$. 
	Finally, $\MeasurableMaps$ denotes the space of measurable mappings, e.g., $\MeasurableMaps(\RecSpace,\DecisionSpace)$ is $\DecisionSpace$-valued measurable mappings defined on $\RecSpace$.
	
	\subsubsection{Classification}\label{sec:Classification}
	The task is to classify an image into one of $k$ distinct labels, or more precisely, associate an image to a probability distribution over all $k$ labels.
	This task is represented by a non-randomized decision rule in a statistical estimation problem where $\TaskSpace := \Integer_k$ and the decision space $\DecisionSpace := \PClass{\TaskSpace}$ is probability distributions over the $k$ labels. 
	The task adapted loss function is given by \cref{eq:TaskLoss} with  
	\[
	\DecisionLoss(\decision,\decisionother) 
	:= - \sum_{\task \in \TaskSpace} \decision(\task) \log \decisionother(\task)
	\text{ for $\decision, \decisionother \in \DecisionSpace$}
	\quad\text{and}\quad
	\featuremap(\task) 
	:= \delta_{\task} \text{ for $\task \in \TaskSpace$.}
	\]
	Bayes risk in \cref{eq:TaskAverageRisk} associated with a decision rule $\TaskOp{} \colon \RecSpace \to \DecisionSpace$ for given task prior $\TaskPrior \in \PClass{\TaskSpace}$ becomes 
	\[
	\AverageRisk_{\TaskPrior}(\TaskOp{}) 
	:= \Expect_{\TaskPrior \otimes \TaskModel_{\task}}\Bigl[ \DecisionLoss\bigl(\featuremap(\sttask),\TaskOp{}(\stsignal) \bigr) \Bigr]
	= \int_{\RecSpace} \int_{\TaskSpace}  
	\Bigl[ - \log\bigl[ \TaskOp{}(\signal)(\task) \bigr] \Bigr] 
	\dint\TaskPrior(\task) \dint \TaskModel_{\task}(\signal).
	\]  
	The corresponding empirical risk minimization in \cref{eq:TrainTaskLoss} is  
	\begin{equation}\label{eq:TrainingClass}
		\MLtaskparam^* 
		\in \argmin_{\MLtaskparam\in \MLTaskParamSet} 
		\biggl\{ \frac{1}{m} \sum_{i=1}^m \Bigl[ - \log\bigl[ \TaskOp{}(\signal_i)(\task_i) \bigr] \biggr\}
		\quad\text{for training data $(\task_i,\signal_i) \in \TaskSpace \times \RecSpace$.}
	\end{equation}
	
	There are several papers dealing with how to construct a suitable (deep) neural network architecture for the set of decision rules $\DecisionClass = \{ \TaskOp{\MLtaskparam} \}_{\MLtaskparam\in\Real^N}$ and solving \cref{eq:TrainingClass} will then correspond to training a classifier, see \cite{LeBoBeHa98} for an early approach based on a convolutional neural network, AlexNet  \cite{Krizhevsky:2012aa} and ResNet \cite{HeZhReSu16} represent examples of further development along this line.
	
	\subsubsection{Semantic segmentation}\label{sec:Segmentation}
	The task here is to classify each point in an image into one of $k$ possible labels, so the special case $k=2$ corresponds to (binary) segmentation. 
	Stated more formally, semantic segmentation applies a mapping that associates each point in an image in $\RecSpace$ to a probability distribution over all $k$ labels.
	
	This task becomes a non-randomized decision rule in a statistical estimation problem where $\TaskSpace := \MeasurableMaps(\signaldomain,\Integer_k)$ and the decision space $\DecisionSpace := \MeasurableMaps(\signaldomain, \PClass{\Integer_k})$ is the set of measurable mappings from $\signaldomain$ to the class of probability measures on $\Integer_k$. 
	The task adapted loss function is given by \cref{eq:TaskLoss} with  
	\begin{align*}
		\DecisionLoss(\decision,\decisionother) 
		&:= \int_{\signaldomain} \Bigl[ - \sum_{i \in \Integer_k} \decision(t)(i) \log\bigl[ \decisionother(t)(i) \bigr] \Bigr]\dint t
		\quad\text{for $\decision, \decisionother \colon \signaldomain \to \PClass{\Integer_k}$,} \\
		\featuremap(\task)(t) 
		&:= \delta_{\task(t)} \text{ for $\task \colon \signaldomain \to \Integer_k$ and $t \in \signaldomain$.}
	\end{align*}
	The decision distance $\DecisionLoss \colon \DecisionSpace \times \DecisionSpace \to \Real$ simply integrates the point-wise cross entropy of the (point-wise) independent  probability measures $\decision(t)$ and $\decisionother(t)$.
	The cross entropy is a well-known notion from information theory for quantifying the dissimilarity between probability distributions \cite{Csiszar:2008aa} and it is often used as a learning objective in generative models involving probability distributions. 
	
	Bayes risk in \cref{eq:TaskAverageRisk} associated with a decision rule $\TaskOp{} \colon \RecSpace \to \DecisionSpace$ for a given task prior $\TaskPrior \in \PClass{\TaskSpace}$ can then be written as  
	\begin{align*}
		\AverageRisk_{\TaskPrior}(\TaskOp{}) 
		&:= \Expect_{\TaskPrior \otimes \TaskModel_{\task}}\Bigl[ \DecisionLoss\bigl(\featuremap(\sttask),\TaskOp{}(\stsignal) \bigr) \Bigr]
		\\
		&=  \int_{\RecSpace} \biggl[ 
		\int_{\TaskSpace} \biggl[ \int_{\signaldomain} -\log\Bigl[ \TaskOp{}(\signal)(t)\bigl(\task(t) \bigr) \Bigr] \dint t \biggr]
		\dint\TaskPrior(\task)
		\biggr] \dint \TaskModel_{\task}(\signal).
	\end{align*}
	The corresponding empirical risk minimization in \cref{eq:TrainTaskLoss} is  
	\begin{equation}\label{eq:TrainingSeg}
		\MLtaskparam^* 
		\in \argmin_{\MLtaskparam\in \MLTaskParamSet} 
		\biggl\{ \frac{1}{m} \sum_{i=1}^m \int_{\signaldomain} -\log\Bigl[ \TaskOp{\MLtaskparam}(\signal_i)(t)\bigl(\task_i(t) \bigr) \Bigr] \dint t \biggr\}.
	\end{equation}
	Note that $\TaskOp{}(\signal)(t)$ is a probability distribution over $\Integer_k$ and $\task(t) \in \Integer_k$ when $\task \in \TaskSpace$, so in particular $\TaskOp{}(\signal)(t)\bigl(\task(t) \bigr) \in [0,1]$ for any $t \in \signaldomain$.
	
	The set of decision rules $\DecisionClass = \{ \TaskOp{\MLtaskparam} \}_{\MLtaskparam\in\Real^N}$ can be parametrized by (deep) neural networks, in which case solving \cref{eq:TrainingSeg} corresponds to training a segmentation operator. 
	Deep neural net architectures suitable for semantic segmentation are presented in \cite{Long:2015aa,Noh:2015aa,Saito:2016aa}, see also the surveys in \cite{Thoma:2016aa,Guo:2018aa}. 
	In particular, the SegNet architecture has been successful for semantic segmentation of 2D images \cite{Badrinarayanan:2017aa}. 
	For (binary) segmentation one may use the U-net \cite{Ronneberger:2015aa,Cicek:2016aa}.
	
	\subsubsection{Anomaly detection}\label{sec:Comparison}
	The task here is to detect the difference (anomaly) between two grey-scale images, so $\RecSpace = \Lebesgue^2(\signaldomain,\Real) \times \Lebesgue^2(\signaldomain,\Real)$ for a fixed domain $\signaldomain \subset \Real^n$.
	This becomes a non-randomized decision rule in a statistical estimation problem where $\TaskSpace := \RecSpace$ and the anomaly is represented by grey-scale images, so the decision space is $\DecisionSpace := \Lebesgue^2(\signaldomain,\Real)$.
	The task adapted loss function is given by \cref{eq:TaskLoss} with  
	\[
	\DecisionLoss(\decision,\decisionother) 
	:= \bigl\Vert \decision - \decisionother \bigr\Vert_{\TaskSpace}^2
	\text{ for $\decision, \decisionother \in \DecisionSpace$}
	\quad\text{and}\quad
	\featuremap(\task) 
	:= \task_1 - \task_2  \text{ for $\task=(\task_1,\task_2) \in \TaskSpace$.}
	\]
	Bayes risk in \cref{eq:TaskAverageRisk} associated with a decision rule $\TaskOp{} \colon \RecSpace \to \DecisionSpace$ for given task prior $\TaskPrior \in \PClass{\TaskSpace}$ becomes 
	\begin{align*}
		\AverageRisk_{\TaskPrior}(\TaskOp{}) 
		&:= \Expect_{\TaskPrior \otimes \TaskModel_{\task}}\Bigl[ \DecisionLoss\bigl(\featuremap(\sttask),\TaskOp{}(\stsignal) \bigr) \Bigr]
		\\
		&=  \int_{\RecSpace} \biggl[ 
		\int_{\TaskSpace} \Bigl\Vert (\task_1 - \task_2) - \TaskOp{}(\signal_1,\signal_2) \Bigr\Vert_{\DecisionSpace}^2 
		\dint\TaskPrior(\task)
		\biggr] \dint \TaskModel_{\task}(\signal_1,\signal_2)
	\end{align*}
	and note that $\task=(\task_1,\task_2) \in \TaskSpace$ and $\signal = (\signal_1,\signal_2) \in \RecSpace$.
	The corresponding empirical risk minimization in \cref{eq:TrainTaskLoss} is  
	\begin{equation}\label{eq:TrainingDifference}
		\MLtaskparam^* 
		\in \argmin_{\MLtaskparam\in \Real^N} 
		\biggl\{ \frac{1}{m} \sum_{i=1}^m \Bigl\Vert (\signal^i_1 - \signal^i_2) - \TaskOp{\MLtaskparam}(\signal^i_1,\signal^i_2) \Bigr\Vert_{\DecisionSpace}^2 \biggr\}
	\end{equation}
	where $(\signal^i_1,\signal^i_2) \in \RecSpace$ is the supervised training data.
	One may here use a (deep) convolutional neural net to parametrize the decision rules in $\DecisionClass = \{ \TaskOp{\MLtaskparam} \}_{\MLtaskparam\in\Real^N}$.

	\subsubsection{Caption generation}\label{sec:CaptionGeneration}
	Caption generation refers to the task of associating an image to an appropriate sentence, or paragraph, that describes its content. 
	More precisely, define $\WordSpace$ to be the set of words in a chosen language, enlarged by a ``stop word'' $\stopword$ that marks the end of the caption. 
	Next, let $\CaptionSpace$ denote the set of captions, which are finite sequences made up of elements from $\WordSpace$ where each sequence contains $\stopword$ exactly once as its last element.  
	Then, caption generation is the task of mapping an image to a sequence in $\CaptionSpace$.
	
	This task becomes a non-randomized decision rule in a statistical estimation problem where $\TaskSpace := \CaptionSpace$ is the set of captions, so $\TaskModel_{\task}$ is the probability of a caption $\task$. The decision space is $\DecisionSpace := \PClass{\CaptionSpace}$, i.e., probability distributions over set of captions $\CaptionSpace$, and the task adapted loss function is given by \cref{eq:TaskLoss} with  
	\begin{align*}
		\DecisionLoss(\decision,\decisionother) 
		&:= -\sum_{c \in \CaptionSpace} \decision(c) \log \decisionother(c)
		\quad\text{for $\decision, \decisionother \in \DecisionSpace$,} \\
		\featuremap(\task) 
		&:= \delta_{\task} \in \PClass{\CaptionSpace} \quad\text{for a caption $\task \in \TaskSpace$.}
	\end{align*}
	
	To express Bayes risk that is to be minimized when computing the optimal decision rule, consider an element $\task \in \TaskSpace = \CaptionSpace$ (caption) and let $\task_i \in \WordSpace$ denote its $i$:th word. 
	Next, let $\WordSpace_n \subset \WordSpace$ denote the set of sequences of $n$ words that do not contain the stop word $\stopword$ (unfinished sentences), i.e., 
	\[ \WordSpace_n := \bigr\{ ( \word_i )_{i=1}^n \subset \WordSpace \mid \word_i \neq \stopword \text{ for $i=1,\ldots,n$} \bigr\}. \]
	Since an element in the decision space $\decision \in \DecisionSpace := \PClass{\TaskSpace}$ is a probability measure on sequences of words (captions), it will in particular yield the following probability measure on $\WordSpace_n$:
	\[
	\decision_{n}\bigl( ( \word_i )_{i=1}^n \bigr) := \decision\bigl( \{ \task \in \TaskSpace \mid \task_i = \word_i \text{ for $i=1,\ldots,n$} \}\bigr)
	\quad\text{for $( \word_i )_{i=1}^n \in \WordSpace_n$.}
	\]
	We now consider the measure for $n+1$ sequences that are conditioned on its first $n$ terms, i.e., let $( \word_i )_{i=1}^n \in \WordSpace_n$ be fixed with $\decision_{n}\bigl( ( \word_i )_{i=1}^n \bigr) > 0$.
	Then, $\decision$ induces to a probability measure $\pi_{\decision} \in \PClass{\WordSpace}$ on the set of words $\WordSpace$ via
	\[
	\pi_{\decision}\bigl( \word \mid ( \word_i )_{i=1}^n \bigr) := \frac{\decision_{n+1}(\word_1, \ldots, \word_n, \word)}{\decision_{n}(\word_1, \ldots, \word_n)}
	\quad\text{for $\word \in \WordSpace$ and $( \word_i )_{i=1}^n \in \WordSpace_n$.}
	\]
	With this notation, one can identify an element $\decision \in \PClass{\TaskSpace}$ with its corresponding representation in $\bigtimes_n \PClass{\WordSpace}(\,\cdot \mid \WordSpace_n)$ by the identity
	\[
	\decision(\task) = \prod_i \pi_{\decision}\bigl(\task \mid( \task_1, \ldots, \task_{n-1}) \bigr).
	\]
	Here $\PClass{\WordSpace}(\,\cdot \mid \WordSpace_n)$ is the set of probability measures on $\WordSpace$ conditioned on $\WordSpace_n$.
	
	Bayes risk in \cref{eq:TaskAverageRisk} associated with a decision rule $\TaskOp{} \colon \RecSpace \to \DecisionSpace$ (potential caption generation operator) for given task prior $\TaskPrior \in \PClass{\TaskSpace}$ can now be expressed through conditional densities on $\WordSpace$:
	\begin{align*}
		\AverageRisk_{\TaskPrior}(\TaskOp{}) 
		&:=  \int_{\TaskSpace} \int_{\RecSpace} \DecisionLoss\bigl(\featuremap(\task),\TaskOp{}(\signal)\bigr)
		\dint \TaskModel_{\task}(\signal) \dint \TaskPrior(\task)
		\\     
		&= \int_{\TaskSpace} \int_{\RecSpace}  - \log \TaskOp{}(\signal)(\task) \dint \TaskModel_{\task}(\signal) \dint \TaskPrior(\task)
		\\
		&= \int_{\TaskSpace} \int_{\RecSpace} - \log \prod_i \pi_{\TaskOp{}(\signal)}(\task \mid \task_1,\ldots, \task_{i-1}) \dint\TaskModel_{\task}(\signal) \dint \TaskPrior(\task)
		\\
		&= \int_{\TaskSpace} \int_{\RecSpace} - \sum_i \log \pi_{\TaskOp{}(\signal)}(\task \mid \task_1,\ldots, \task_{i-1}) \dint\TaskModel_{\task}(\signal) \dint \TaskPrior(\task).
	\end{align*}
	To derive the corresponding empirical risk minimization problem, we consider a fixed class of decision rules that are given via a parametrization of their representation in term of marginal densities. 
	More precisely, given a parameter set $\MLtaskparam \in \Real^m$, the decision rule $\TaskOp{\MLtaskparam}$ is given by
	\[ 
	\TaskOp{\MLtaskparam}(\signal)(\task) = \prod_i \Psi_{\MLtaskparam}(\signal; \ \task \mid \task_1 \ldots ,\task_{i-1})
	\quad\text{where $\signal \in \RecSpace$ and $\task \in \TaskSpace$,}
	\]
	and $\Psi_{\MLtaskparam}$ is parametrized by a recurrent neural network, for example using the \ac{LSTM} architecture \cite{Hochreiter:1997}.
	The corresponding empirical risk minimization in \cref{eq:TrainTaskLoss} is now given by  
	\begin{equation}\label{eq:TrainingCaptionGen}
		\MLtaskparam^* 
		\in \argmin_{\MLtaskparam\in \Real^N} 
		\biggl\{
		\int_{\TaskSpace} \int_{\RecSpace} - \sum_i \log \Psi_{\MLtaskparam}(\signal; \ \task \mid \task_1 \ldots ,\task_{i-1})
		\dint \TaskModel_{\task}(\signal) \dint\TaskPrior(\task)
		\biggr\}.
	\end{equation}
	Solving \cref{eq:TrainingCaptionGen} corresponds to training a image caption generator \cite{Vinyals:2015}, see also \cite{Karpathy:2017aa}.
	
	\subsection{Further imaging tasks}\label{sec:OtherTasks}
	A common trait with the examples worked out in \cref{sec:ExampleTasks} is that each of them can be recast as finding an optimal decision rule in a statistical estimation problem . 
	Furthermore, deep neural networks offer a computationally feasible implementation of estimating this decision rule from supervised training data. 
	
	There is a wide range of other tasks beyond those mentioned in \cref{sec:ExampleTasks} that can be represented as a non-randomised decision rule, which in turn is efficiently parametrized by a suitable deep neural network architecture. 
	The (incomplete) list below is based on \cite{Ahmed:2018aa,Prince:2012aa} and aims to show the diversity of tasks from computer vision that can be approached successfully using a suitable deep neural network architecture.
	\begin{description}
		\item[Inpainting:]
		This is essentially interpolation/extrapolation to recover lost or deteriorated parts of images and videos and approaches based on trainable neural networks 
		\cite{Xie:2012aa}.
		\item[Depixelization/super-resolution:]
		The task is here to upsample, i.e., to synthesize realistic details into images while enhancing their resolution \cite{Dahl:2017aa} or to fill out information ``between'' pixels by increasing the resolution of the final picture, also known as the single image super-resolution problem \cite{Romano:2017ab}. 
		\item[Demosaicing:]
		The task here is to reconstructing a full color image from the incomplete color samples output from an image sensor \cite{Syu:2018aa}.
		Almost all digital cameras, ranging from smartphone cameras to the top-of-the-line digital SLR cameras, use a demosaicing algorithm to convert the captured sensor information into a color image.
		\item[Colourising:]
		The task is to apply color to grey scale photos and videos \cite{Iizuka:2016aa}.
		\item[Image translation:]  
		The task is to translate between two classes of images of the same object, e.g., \ac{CT} and \ac{MRI} images \cite{Wolterink:2017aa}.
		\item[Object recognition:]  
		This visual classification task involves localization, detection and classification and this can be seen as an example of constellation models, which are a general class of model that describe objects in terms of a set of parts and their spatial relations \cite[section~20.5]{Prince:2012aa}.
		An integrated framework based on deep convolutional networks for detection and localizatio was already introduced in \cite{Sermanet:2013aa}, see also \cite{He:2016aa} for recognition and \cite{Farabet:2013aa} for scene understanding.
		The most well-known use case is recognition of multiple faces in an image where statistical shape models play a central role \cite{Zhao:2003aa,Druzhkov:2016aa,Wang:2018aa} 
		An analogous task relevant for clinical image guided diagnostics is detecting melanoma from images of skin lesions \cite{Esteva:2017aa,Haenssle:2018aa}.
		\item[Non-rigid image registration:]  
		The task here is to deform a template image in a ``natural'' way so that it matches a target image. This is a key step in spatiotemporal imaging and 
		deep neural networks have been utilized for this purpose \cite{Ghosal:2017ac,Yang:2017aa}.
		\item[Parametric regression:]  
		The task here is to statistically determine relationships among variables (parameters) in a statistical model. 
		This is important in clinical diagnostics where one seeks to determine risk factors and biomarkers of diagnostic and prognostic value associated with clinical progression and severity of specific diseases.
		An example of image guided regression is estimating cardiovascular risk factors, such as age, from retinal fundus photographs  \cite{Poplin:2017aa}.
		Another is predicting patient overall survival as in the 2018 Multimodal Brain Tumor Segmentation Challenge based on BRATS imaging data \cite{Menze:2015aa}, and predicting scores for Alzheimer's disease from imaging data \cite{McCrackin:2018aa,Lee:2018aa}.
	\end{description}
	
	The abstract framework in \cref{sec:TaskAdaptedAbstract} allows one to perform \emph{any} of the above tasks (and those in \cref{sec:ExampleTasks}) \emph{jointly} with a reconstruction step for solving an inverse problem.
	Examples of the latter are deconvolution, Fourier inversion (\ac{MRI} imaging), or more elaborate schemes for various types of tomographic imaging. 
	A key success factor is access to suitable training data, another is usage of a differentiable loss function along with a trainable differentiable parametrization (deep neural network architecture) of the task operator.

	\section{Task adapted reconstruction}\label{sec:TaskAdaptedRec} 
	As stated in the introduction (\cref{sec:Intro}), solving an inverse problem (reconstruction) is one step in a pipeline (\cref{fig:Pipeline}) that often involves further coupled tasks necessary for decision making.
	\emph{Task adapted reconstruction} refers to methods that integrate the reconstruction with (parts of) the decision making procedure, thereby adapting the reconstruction method for the specific task at hand.
%
%
%
	
	\subsection{Abstract setting}\label{sec:TaskAdaptedAbstract}
	We will present a generic framework for task adapted reconstruction that is computationally feasible \emph{and} adaptable to specific inverse problems and tasks.
	A key element is to formalize both the reconstruction and task as non-randomized decision rules within a statistical estimation problem.
	
	More precisely, our starting point is the inverse problem in \cref{sec:InvProb} where the data model in \cref{eq:DataModel} is known. Following \cref{sec:RecoDecisionRule}, the reconstruction step can be seen as a decision rule in a statistical estimation problem defined by the statistical model $\bigl( (\DataSpace,\DataSpaceSAlg),\{ \DataModel(\signal) \}_{\signal \in \RecSpace} \bigr)$, decision space $(\RecSpace,\RecSpaceSAlg)$, and loss $\SignalLoss \colon \RecSpace \times \RecSpace \to \Real$.
	Given a prior $\SignalPrior \in \PClass{\RecSpace}$ allows us to define a reconstruction method as a non-randomized decision rule that minimizes the $\SignalPrior$-average risk (Bayes risk), i.e., as a mapping that solves
	\begin{equation}\label{eq:RecOpDecision}  
		\RecOpOptim \in \!\!
		\argmin_{\RecOp{} \in \MeasurableMaps(\DataSpace,\RecSpace)} 
		\biggl\{
		\Expect_{\SignalPrior \otimes \DataModel(\signal)}
		\Bigl[ 
		\SignalLoss\bigl(\stsignal,\RecOp{}(\stdata) \bigr)
		\Bigr]
		\biggr\}
		\quad\text{where $(\stsignal,\stdata) \sim \SignalPrior \otimes \DataModel(\signal)$.}
	\end{equation}
	Likewise, following \cref{sec:AbstractTask} a task becomes a decision rule in a statistical estimation problem defined by the statistical model $\bigl( (\RecSpace,\RecSpaceSAlg), \{ \TaskModel_{\task} \}_{\task \in \TaskSpace} \bigr)$, decision space $(\DecisionSpace,\DecisionSpaceSAlg)$, and loss given by \cref{eq:TaskLoss} with known $\featuremap \colon \TaskSpace \to\DecisionSpace$ (feature extraction map) and $\DecisionLoss \colon \DecisionSpace \times \DecisionSpace \to \Real$ (decision distance).
	Given a task prior $\TaskPrior \in \PClass{\TaskSpace}$, the non-randomized decision rule representing the task (task operator) can be seen as a minimizer to the $\TaskPrior$-average risk (Bayes risk): 
	\begin{equation}\label{eq:TaskOpDecision}  
		\TaskOpOptim \in 
		\argmin_{\TaskOp{} \in \MeasurableMaps(\RecSpace,\DecisionSpace)} 
		\biggl\{
		\Expect_{\TaskPrior \otimes \TaskModel_{\task}} 
		\Bigl[ 
		\DecisionLoss\bigl(\featuremap(\sttask),\TaskOp{}(\stsignal) \bigr) 
		\Bigr]
		\biggr\}
		\quad\text{where $\bigl(\sttask,\stsignal \bigr) \sim \TaskPrior \otimes \TaskModel_{\task}$.}
	\end{equation}
	We now have the following three approaches to task adapted reconstruction.
	\begin{description}
		\item[Sequential approach:]
		A sequential approach starts with determining the reconstruction operator, and then uses it to define the the task operator. It is based on assuming that statistical assumptions for $(\sttask,\stsignal) \sim \TaskPrior \otimes \TaskModel_{\task}$ and $(\stsignal,\stdata) \sim \SignalPrior \otimes \DataModel(\signal)$ are consistent, e.g., by assuming that $\TaskModel_{\task}$ is the push forward of $\DataModel(\signal)$ through the reconstruction operator\footnote{One may here consider alternate assumptions, like assuming that 
			$\SignalPrior \in \PClass{\RecSpace}$ can be obtained by marginalizing the measure $\TaskPrior \otimes \TaskModel_{\task}$ over $\TaskSpace$ using $\TaskPrior \in \PClass{\TaskSpace}$.}.
		The task adapted reconstruction is then given as
		\begin{equation}\label{eq:JointOpGen}  
			\TaskOpOptim \circ \RecOpOptim \colon \DataSpace \to \DecisionSpace 
		\end{equation}
		where $\RecOpOptim \in \MeasurableMaps(\DataSpace,\RecSpace)$ solves \cref{eq:RecOpDecision} and $\TaskOpOptim \in \MeasurableMaps(\RecSpace,\DecisionSpace)$ solves \cref{eq:TaskOpDecision}.
		
		\item[End-to-end approach:]
		The fully end-to-end approach ignores the distinction between reconstruction and the task. Assuming $(\sttask,\stdata) \sim \JointLawTaskData$ for some measure $\JointLawTaskData \in \PClass{\TaskSpace \times \DataSpace}$, the task adapted reconstruction is here given as 
		$\widehat{\EndToEnd} \colon \DataSpace \to \DecisionSpace$
		that solves
		\begin{equation}\label{eq:EndToEndApproach}  
			\widehat{\EndToEnd} \in 
			\argmin_{\EndToEnd \in \MeasurableMaps(\DataSpace,\DecisionSpace)} 
			\biggl\{
			\Expect_{\TaskPrior \otimes \TaskModel_{\task}} 
			\Bigl[ 
			\DecisionLoss\bigl(\featuremap(\sttask),\EndToEnd(\stdata) \bigr) 
			\Bigr]
			\biggr\}
			\quad\text{where $(\sttask,\stsignal) \sim \JointLawTaskData$.}
		\end{equation}
		
		\item[Joint approach:]\label{sec:JointApproach}
		The joint approach is a middle-way between the sequential and end-to-end approaches. It is based on assuming that there is a joint law $(\sttask,\stsignal,\stdata) \sim \JointLawTaskSignalData$, which by the chain rule in probability can be written in terms of conditional probabilities: 
		\[
		\dint \JointLawTaskSignalData(\task, \signal, \data)
		=
		\dint \CondLaw(\data \mid \task, \signal)
		\dint \CondLaw(\signal \mid \task)
		\dint \CondLaw(\task).
		\]
		Assume that $\signal$ is a sufficient statistic for $\data$, i.e., $\dint \CondLaw(\data \mid \task, \signal) = \dint \CondLaw(\data \mid \signal)$.
		Combined with $(\sttask,\stsignal) \sim \TaskPrior \otimes \TaskModel_{\task}$ and $(\stsignal,\stdata) \sim \SignalPrior \otimes \DataModel(\signal)$, we obtain 
		\[
		\dint \JointLawTaskSignalData(\task, \signal, \data)
		=
		\dint \DataModel(\signal)(\data)
		\dint \TaskModel_{\task}(\signal)
		\dint \TaskPrior(\task).
		\]
		Next, we introduce a \emph{joint loss} that interpolates between the sequential case and the end-to-end case. Specifically, we let $\JointLoss \colon (\RecSpace \times \DecisionSpace) \times  (\RecSpace \times \DecisionSpace) \to \Real$ be given as 
		\begin{equation}\label{eq:JointLoss}
			\JointLoss\bigl((\signal,\decision),(\signalother,\decisionother)\bigr) 
			:= (1 - C) \SignalLoss(\signal,\signalother) + C \DecisionLoss(\decision,\decisionother)
			\quad\text{for fixed $C \in [0, 1]$.}
		\end{equation}
		Task adapted reconstruction is now given by \cref{eq:JointOpGen} where the operators jointly solve 
		\begin{equation}\label{eq:JointApproach}
			(\RecOpOptim,\TaskOpOptim) 
			\in \!\!
			\argmin_{\substack{\TaskOp{} \in \MeasurableMaps(\RecSpace,\DecisionSpace) \\ \RecOp{} \in \MeasurableMaps(\DataSpace,\RecSpace)}} 
			\Expect_{\JointLawTaskSignalData}
			\Bigr[
			\JointLoss\Bigl( 
			\bigl(\stsignal,\featuremap(\sttask)\bigr),
			\bigl(\RecOp{}(\stdata),\TaskOp{} \circ \RecOp{}(\stdata)\bigr)
			\Bigr)
			\Bigr].
		\end{equation}
	\end{description}
	Note first that in the limit $C \to 0$, the joint approach becomes the sequential one. 
	Next, it may seem sufficient to only consider the loss $\DecisionLoss$ in \cref{eq:JointApproach}, i.e., to set $C=1$ in \cref{eq:JointLoss}, which recovers the end-to-end approach. 
	There is however a problem with non-uniqueness in this case since 
	\begin{equation}\label{eq:JointRegEffect}
		(\RecOpOptim,\TaskOpOptim) \text{ solves \cref{eq:JointApproach}} 
		\implies
		(\OpB^{-1}\circ \RecOpOptim, \TaskOpOptim \circ \OpB) \text{ solves \cref{eq:JointApproach} for any invertible $\OpB \colon \RecSpace \to \RecSpace$.}
	\end{equation}
	This non-uniqueness does not arise when $C < 1$, so incorporating a loss term associated with the reconstruction may act as a regularizer.
	This also indicates that the limit $C \to 1$ does not necessarily coincide with the case $C = 1$.

	\subsection{Computational implementation}\label{sec:TaskCompImpl}
	In \cref{sec:GoodPrior} we mentioned the difficulty to select an appropriate prior $\SignalPrior \in \PClass{\RecSpace}$ for Bayesian inversion whereas the measure $\DataModel(\signal) \in \PClass{\DataSpace}$ is often known by the data model. 
	Furthermore, both measures $\TaskPrior \in \PClass{\TaskSpace}$ and $\TaskModel_{\task} \in \PClass{\RecSpace}$ must be considered as unknown for most tasks.
	Hence any realistic scenario would contain $\JointLawTaskSignalData$ as an unknown.
	An option is to replace these measures by their empirical counterparts given by suitable supervised training data.
	
	Another concern is computational feasibility. 
	The optimizations in \cref{eq:RecOpDecision,eq:TaskOpDecision,eq:EndToEndApproach,eq:JointApproach} are taken over all measurable mappings between relevant spaces, which is computationally unfeasible. 
	This can be addressed by considering parametrized sets of measurable mappings as done in \cref{sec:LIR,sec:AbstractTask}.
	More precisely, we employ a learned iterative scheme to parametrize a family of reconstruction methods $\RecOp{\MLsignalparam} \colon \DataSpace \to \RecSpace$ since this parametrization includes knowledge about the data model (\cref{sec:LIR}). 
	Likewise, decision rules associated with the task are given by an appropriate parametrized family of mappings $\TaskOp{\MLtaskparam} \colon \RecSpace \to \DecisionSpace$.
	Finally, the approach for the end-to-end setting is to directly parametrize $\EndToEnd_{\MLtaskparam} \colon \DataSpace \to \DecisionSpace$.
	
	Using such parametrizations allows one to reformulate \cref{eq:RecOpDecision,eq:TaskOpDecision} as \cref{eq:TrainSequantialApproach}, \cref{eq:EndToEndApproach} as \cref{eq:TrainSequantialApproach}, and \cref{eq:JointApproach} as \cref{eq:TrainEndToEndApproach}.
	A key aspect for the implementation is to use \ac{SGD} based methods for finding appropriate parameters by approximately solving the empirical versions of \cref{eq:TrainSequantialApproach,eq:EndToEndApproach,eq:TrainEndToEndApproach}. 
	This requires that the above parametrizations are differentiable, which in particular requires using differentiable loss-functions. 
	
	Depending on the type of supervised training data, we can now pursue either of the three approaches listed in \cref{sec:TaskAdaptedAbstract}.
	\begin{description}
		\item[Sequential approach:]
		Here we have access to two sets of supervised training data that are coupled: 
		\begin{equation}\label{eq:TDataSeq}
			\begin{split}
				(\signal_i,\data_i) \in \RecSpace \times \DataSpace 
				&\quad\text{generated by $(\stsignal,\stdata) \sim \SignalPrior \otimes \DataModel(\signal)$ for $i=1,\ldots,m$}
				\\
				(\task_i,\signal_i) \in \TaskSpace \times \RecSpace 
				& \quad\text{generated by $(\sttask,\stsignal) \sim \TaskPrior \otimes \TaskModel_{\task}$ for $i=1,\ldots,m$.}
			\end{split}
		\end{equation}
		The coupling is that $\signal_i$'s in the second data set (bottom one) are reconstructions obtained from $\data_i$'s in the first data set (top one). 
		This ensures consistency with the statistical assumptions mentioned before for the sequential approach. 
		The reconstruction is then given by the mapping 
		\begin{equation}\label{eq:JointOpGenParam}  
			\TaskOp{\MLtaskparam^*} \circ \RecOp{\MLsignalparam^*} \colon \DataSpace \to \DecisionSpace
		\end{equation}
		where $\MLsignalparam^* \in \MLSignalParamSet$ solves \cref{eq:TrainRecLoss} and $\MLtaskparam^* \in \MLTaskParamSet$ solves \cref{eq:TrainTaskLoss}, i.e., 
		\begin{equation}\label{eq:TrainSequantialApproach} 
			\begin{split} 
				\MLsignalparam^* &\in 
				\argmin_{\MLsignalparam\in \MLSignalParamSet} \Bigl\{ \frac{1}{m} \sum_{i=1}^m \SignalLoss\bigl(\signal_i,\RecOp{\MLsignalparam}(\data_i)\bigr) \Bigr\}
				\\  
				\MLtaskparam^* &\in 
				\argmin_{\MLtaskparam\in \MLTaskParamSet} \Bigl\{ \frac{1}{m} \sum_{i=1}^m \DecisionLoss\bigl( \featuremap(\task_i),\TaskOp{\MLtaskparam}(\signal_i) \bigr) \Bigr\}.
			\end{split}
		\end{equation}
		Note here that the only requirement for the sequential approach is that the assumptions $(\sttask,\stsignal) \sim \TaskPrior \otimes \TaskModel_{\task}$ and $(\stsignal,\stdata) \sim \SignalPrior \otimes \DataModel(\signal)$ are jointly consistent.  
		The learned task operator given by solving for $\MLtaskparam^*$ in \cref{eq:TrainSequantialApproach} is only well defined for input taken from the support of it's training data, so it may fail when applied to data it has never seen. This is especially the case if new data has a different statistical assumption.
		Hence, it is important to ensure the range of the reconstruction operator is contained in the support of the elements $\signal \in \RecSpace$ used to train the task.
		In most practical implementations, this is ensured by simply letting $\signal_i$'s in $(\task_i,\signal_i)$ in \cref{eq:TDataSeq} be the output of the learned reconstruction operator $\RecOp{\MLsignalparam^*} \colon \RecSpace \to \DataSpace$.
		
		\item[End-to-end approach:]
		Supervised training data is here of the form 
		\begin{equation}\label{eq:TDataEndToEnd}
			(\task_i,\data_i) \in \TaskSpace \times \RecSpace  
			\quad\text{generated by $(\sttask,\stsignal) \sim \TaskPrior \otimes \TaskModel_{\task}$ for $i=1,\ldots,m$.}
		\end{equation}
		The reconstruction is given by 
		$\EndToEnd_{\MLtaskparam} \colon \DataSpace \to \DecisionSpace$
		where $\MLtaskparam^* \in \MLTaskParamSet$ solves  
		\begin{equation}\label{eq:TrainEndToEndApproach} 
			\MLtaskparam^* \in 
			\argmin_{\MLtaskparam\in \MLTaskParamSet} \Bigl\{ \frac{1}{m} \sum_{i=1}^m \DecisionLoss\bigl( \featuremap(\task_i),\EndToEnd_{\MLtaskparam}(\data_i) \bigr) \Bigr\}.
		\end{equation}
		
		\item[Joint approach:]
		In this approach we assume access to supervised training data that jointly involves the reconstruction and task: 
		\begin{equation}\label{eq:TDataJoint}
			(\signal_i,\data_i,\task_i) \in \RecSpace \times \DataSpace \times \TaskSpace
			\quad\text{generated by $(\stsignal,\stdata,\sttask) \sim \JointLawTaskSignalData$ for $i=1,\ldots,m$.}
		\end{equation}
		Given such data, the corresponding reconstruction method can be defined as in \cref{eq:JointOpGenParam} where $(\MLsignalparam^*,\MLtaskparam^*) \in \MLSignalParamSet \times \MLTaskParamSet$ solves the following joint empirical loss minimization: 
		\begin{equation}\label{eq:TrainJointApproach}
			\begin{split}
				(\MLsignalparam^*,\MLtaskparam^*) 
				&\in\!\!\!
				\argmin_{(\MLsignalparam,\MLtaskparam) \in \MLSignalParamSet \times \MLTaskParamSet} 
				\biggl\{ 
				\frac{1}{m}\sum_{i=1}^m
				\JointLoss\Bigl( 
				\bigl(\signal_i,\featuremap(\task_i)\bigr),
				\bigl(\RecOp{\MLsignalparam}(\stdata_i), \TaskOp{\MLtaskparam} \circ \RecOp{\MLsignalparam}(\stdata_i) \bigr)
				\Bigr)            
				\biggr\}      
			\end{split}
		\end{equation}
		where $\JointLoss \colon (\RecSpace \times \DecisionSpace) \times  (\RecSpace \times \DecisionSpace) \to \Real$ is the joint loss in \cref{eq:JointLoss}.
		Note that one may \emph{in addition} have access to separate sets of training data of the form \cref{eq:TDataSeq} \emph{and} \cref{eq:TDataJoint}.
		In such case, it is possible to first \emph{pre-train} by solving for \eqref{eq:TrainRecLoss} and \eqref{eq:TrainTaskLoss} separately, and use the resulting outcomes to initialize an algorithm for solving \cref{eq:TrainJointApproach}.
	\end{description}
	
	\subsection{Applications}\label{sec:Applications}
	In the following we demonstrate performance of the task adapted reconstruction scheme for \cref{eq:JointOpGenParam} that is based on solving \cref{eq:TrainJointApproach}.
	All cases involve tomographic reconstruction from 2D parallel beam data and as tasks, we consider MNIST classification and segmentation.
	
	\subsubsection{Joint tomographic reconstruction and classification}\label{sec:MNIST}
	\begin{description}
		\item[Task:] Recover probabilities that a 2D grey scale MNIST image is a \texttt{0},\texttt{1}, \ldots,\texttt{9}  from noisy parallel beam tomographic data (see \cref{sec:Classification}).
		\item[Data:]
		Elements in $\DataSpace$ are real-valued functions representing samples of a Poisson random variable with mean equal to the exponential of the parallel beam ray transform and an intensity corresponding to 60 photons/line. The ray transform is digitized by sampling the angular variable at 5~uniformly sampled points in $[0,\pi]$ with 25~lines/angle.
		\item[Model parameter space:] 
		Elements in $\RecSpace$ are functions representing images supported on a fixed rectangular region $\signaldomain \subset \Real^2$, so $\RecSpace:=\Lebesgue^2(\signaldomain,\Real)$.
		These are discretized by sampling on a uniform $28 \times 28$ grid.
		The loss $\SignalLoss \colon \RecSpace \times \RecSpace \to \Real$ is the squared $\Lebesgue^2$-distance on $\RecSpace$.
		\item[Decision space:] 
		$\TaskSpace := \{\text{\texttt{0}},\text{\texttt{1}}, \ldots, \text{\texttt{9}} \}$ is the set of labels and $\DecisionSpace$ is probability distributions over $\TaskSpace$ with a loss function $\DecisionLoss \colon \DecisionSpace \times \DecisionSpace \to \Real$ given by the cross entropy:  
		\[ 
		\DecisionLoss(\decision,\decisionother)    
		:= - \sum_{i \in \TaskSpace} \decision(i) \log\bigl[ \decisionother(i) \bigr] 
		\quad\text{for $\decision, \decisionother \in \PClass{\TaskSpace}$.}
		\]
		In addition to cross entropy, we employ \emph{classification accuracy} to measure performance. Given a probability distribution $\decision \in \DecisionSpace$ over $\TaskSpace$, the single label prediction is defined to be the element in $\TaskSpace$ that is assigned the highest probability, i.e. $\argmax_{\task \in \TaskSpace} \decision(\task)$. The percentage of images in the evaluation data set for which the predicted label coincides with the real one is reported as classification accuracy. 
		\item[Reconstruction and task operators:] 
		Reconstruction $\RecOp{\MLsignalparam} \colon \DataSpace \to \RecSpace$ is given by the learned gradient descent in \cite{Adler:2017aa} and task $\TaskOp{\MLtaskparam} \colon \RecSpace \to \DecisionSpace$ is a MNIST classifier given by a standard convolutional neural net classifier with three convolutional layers, each followed by $2 \times 2$ max pooling for segmentation. The activation functions used were ReLUs, layers had 32, 64 and 128 channels, respectively. 
		The final layer is dense and transforms the output of the last convolutional layer to a logit layer of size 10, with the last activation function being a softmax.
		\item[Joint training:] Joint supervised data is given as 512\,000 
		triplets $(\signal_i,\data_i,\task_i)$ where $\task_i \in\TaskSpace$ is the label corresponding to the MNIST labels.  
		We also performed pre-training for both the reconstruction and task operator (classifier). The reconstruction operator was pre-trained using 256\,000 pairs $(\signal_i,\data_i)$ with 8\,000 steps with a batch size of 64 and the task operator (classifier) was pre-trained until 97.7\% accuracy. 
		Note here that we use about 60\,000 entries from the MNIST database, so the above supervised data is not statistically 
		independent. 
	\end{description}
	Example outcomes, which are summarized in \cref{tab:JointRecoClass} and \cref{fig:JointRecoClass}, show that a joint approach outperforms a sequential one when considering the classification accuracy. 
	Besides an improved classification accuracy we also see a significant improvement regarding interpretability. 
	The reconstructed image part can in the joint setting actually be used as a benchmark to assess the reconstructed classification probabilities. 
	On the other hand, the sequential approach results in classification probabilities that deviates from this intuitive observation. 
	We also see that in both cases, the classification probabilities are unnaturally concentrated on a single label, but this is a know phenomena also for regular for MNIST classification \cite{GuGeSuWe17}.
	\begin{table}[ht]
		\centering
		\begin{tabular}{l l r r r}
			\toprule 
			Approach & \qquad  & Accuracy & $\Lebesgue^2$-loss & Cross entropy \\
			\otoprule
			Pre-training & & 93.61\% & 9.0 & 0.643 \\
			Sequential & & 96.01\% & 9.0 & 0.124 \\
			End-to-end & & 96.70\% & 19.7 & 0.118 \\
			Joint with $C = 0.01$ & & 96.74\% & 12.8 & 0.108 \\
			Joint with $C = 0.5$ & & 97.00\% & 9.2 & 0.100 \\
			Joint with $C = 0.999$ & & 96.61\% & 9.0 & 0.108 \\
			Classification on true images & & 97.76\% & &0.088 \\
			\bottomrule
		\end{tabular} 
		\caption{In both the pre-training and sequential approaches, the reconstruction and task operators are trained separately. 
			In the sequential approach, the task operator is then further trained on the output of the trained reconstruction operator.
			In the end-to-end approach, which corresponds to $C=0$ in \cref{eq:JointLoss}, the reconstruction operator is pre-trained with $\Lebesgue^2$-loss. 
			Finally, the joint approach uses the full loss \cref{eq:JointLoss}.
			We see that the classification accuracy (explained in ``Decision space'' in \cref{sec:MNIST}) improves when using a joint approach.
			In fact, using a ``suitable'' $C$ (\cref{fig:JointRecSegLossMNIST}) yields an accuracy of $97.00\%$ that is quite close to the upper limit of $97.76\%$, which is the accuracy of the classifier when trained on true images.}\label{tab:JointRecoClass}
	\end{table} 
	
	
	\begin{figure}
		\centering
		\begin{subfigure}[t]{\linewidth}
			\centering
			\begin{tikzpicture}
			\begin{axis}[
			height=0.16\textheight,
			width=0.9\linewidth,
			x label style={at={(axis description cs:1,0.35)},anchor=north,xshift=0.2cm},
			xlabel=$C$,
			ylabel style={yshift=0.1cm,rotate=-90},  
			ylabel=$\SignalLoss$,  
			xtick={0, 1, 2, 3, 4, 5},
			xticklabels={0.01, 0.1, 0.5, 0.9, 0.99, 0.999}
			]  
			\addplot table [y=SLoss, x=Loc, col sep=comma]{MNIST_data.txt};
			\end{axis}
			\end{tikzpicture} 
			\\[0.75em]
			\begin{tikzpicture}
			\begin{axis}[
			height=0.16\textheight,
			width=0.9\linewidth,
			x label style={at={(axis description cs:1,0.35)},anchor=north,xshift=0.2cm},
			xlabel=$C$,
			ylabel style={yshift=0.1cm,rotate=-90},  
			ylabel=$\DecisionLoss$,
			xtick={0, 1, 2, 3, 4, 5},
			xticklabels={0.01, 0.1, 0.5, 0.9, 0.99, 0.999}
			]  
			\addplot table [y=DLoss, x=Loc, col sep=comma]{MNIST_data.txt};
			\end{axis}
			\end{tikzpicture}
			\caption{Plot of loss functions after joint training for different $C$ in \cref{eq:JointLoss}.
				Clearly, there is no joint minimizer but $0.5 \lesssim C \lesssim 0.9$ is a good compromise.}\label{fig:JointRecSegLossMNIST}
		\end{subfigure}
		\begin{subfigure}[t]{0.3\linewidth}
			\centering
			\vspace{0pt}
			\includegraphics[height=0.145\textheight, trim={9.75mm 49mm 3.5mm 3.2mm}, clip]{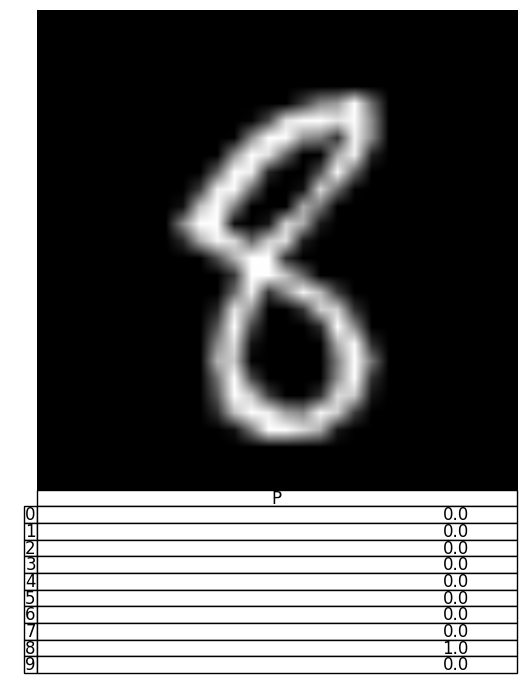}    
			\caption{True image \& class: \texttt{8}.}
		\end{subfigure}%
		\hfill
		\begin{subfigure}[t]{0.67\linewidth}
			\centering
			\begin{minipage}[t]{0.39\linewidth}     
				\centering
				\vspace{0pt}
				\includegraphics[height=0.145\textheight, trim={9.75mm 49mm 3.5mm 3.2mm}, clip]{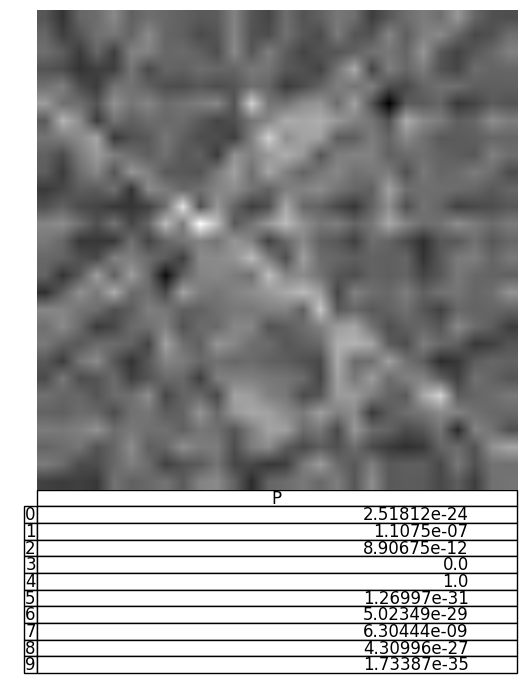}
			\end{minipage}
			\begin{minipage}[t]{0.59\linewidth}
				\centering   
				\vspace{0pt}
				{\small
					\begin{tabular}{c r c r}
						\toprule 
						Class & Prob & Class & Prob \\
						\otoprule  
						\texttt{0} & 0.00\%  & \texttt{5} & 0.00\% \\
						\texttt{1} & 0.00\%  & \texttt{6} & 0.00\% \\
						\texttt{2} & 0.00\%  & \texttt{7} & 0.00\% \\
						\texttt{3} & 0.00\%  & \texttt{8} & 0.01\% \\
						\texttt{4} & 99.99\%  & \texttt{9} & 0.00\% \\
						\bottomrule
					\end{tabular}
				}
			\end{minipage}
			\caption{Sequential approach (FBP): Most likely class is \texttt{4}.}
		\end{subfigure}%
		\\[0.5em]
		\begin{subfigure}[t]{0.3\linewidth}
			\centering
			\vspace{0pt}
			\includegraphics[trim={25mm 60mm 20mm 60mm}, clip, scale=0.19, angle=90]{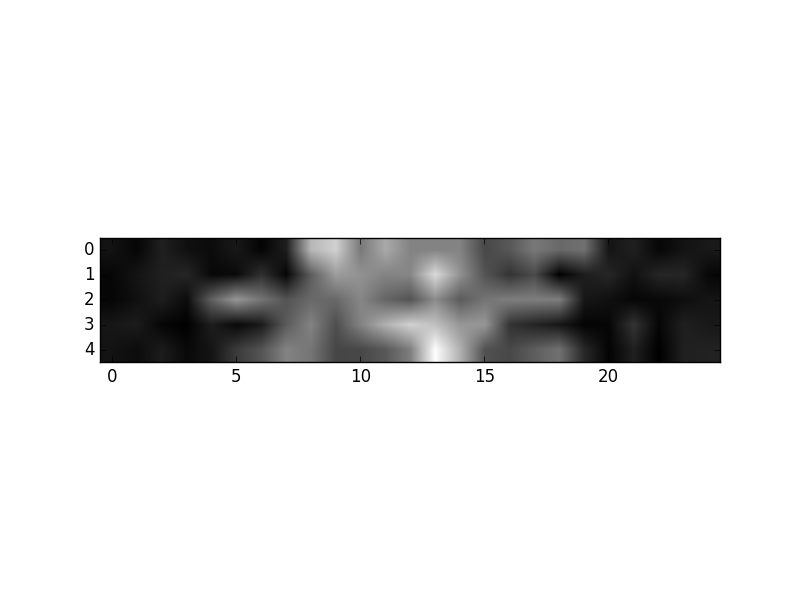}  
			\caption{Data: Sinogram with 5 directions, 25 lines/direction.}  
		\end{subfigure}%
		\hfill
		\begin{subfigure}[t]{0.67\linewidth}
			\centering
			\begin{minipage}[t]{0.39\linewidth}     
				\centering
				\vspace{0pt}
				\includegraphics[height=0.145\textheight, trim={9.75mm 70mm 3.5mm 3.2mm}, clip]{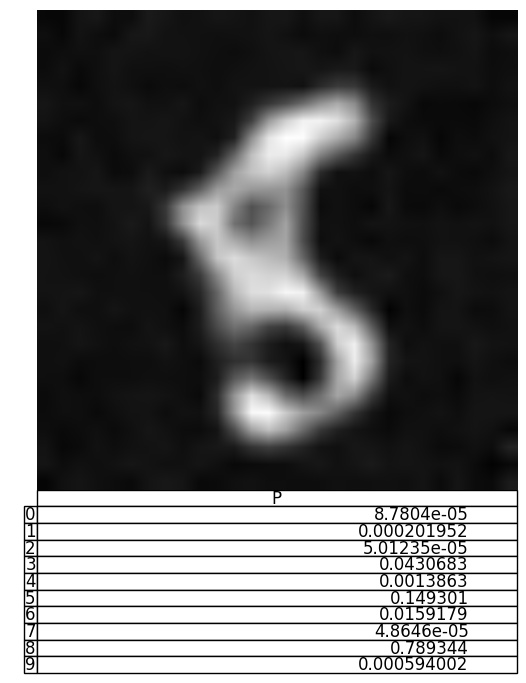}
			\end{minipage}
			\begin{minipage}[t]{0.59\linewidth}
				\centering
				\vspace{0pt}
				{\small
					\begin{tabular}{c r c r}
						\toprule 
						Class & Prob & Class & Prob \\
						\otoprule  
						\texttt{0} & 0.00\%  & \texttt{5} & 14.93\% \\
						\texttt{1} & 0.02\%  & \texttt{6} & 1.59\% \\
						\texttt{2} & 0.00\%  & \texttt{7} & 0.00\% \\
						\texttt{3} & 4.31\%  & \texttt{8} & 78.96\% \\
						\texttt{4} & 0.13\%  & \texttt{9} & 0.06\% \\
						\bottomrule
					\end{tabular}
				}
			\end{minipage}
			\caption{Sequential approach (learned iterative): Most likely class is \texttt{8}.}
		\end{subfigure}%
		\\[1em]
		\begin{subfigure}[t]{0.3\linewidth}
			\centering
			\vspace{0pt}
		\end{subfigure}%
		\hfill
		\begin{subfigure}[t]{0.67\linewidth}
			\centering
			\begin{minipage}[t]{0.39\linewidth}    
				\centering 
				\vspace{0pt}
				\includegraphics[height=0.145\textheight, trim={9.75mm 70mm 3.5mm 3.2mm}, clip]{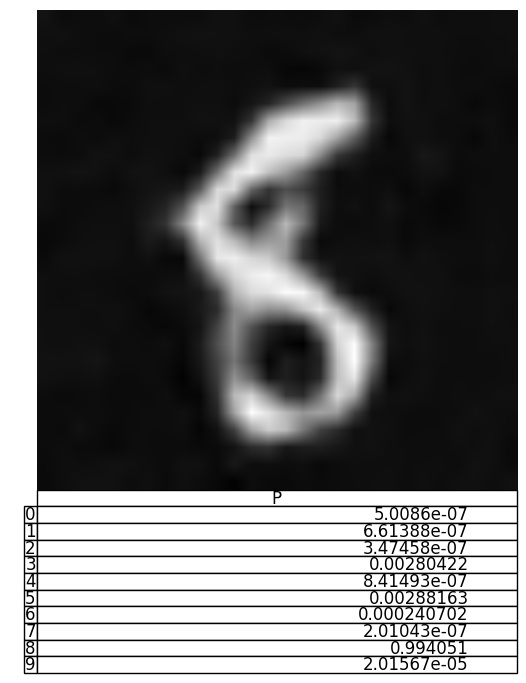}
			\end{minipage}
			\begin{minipage}[t]{0.59\linewidth}
				\centering
				\vspace{0pt}
				{\small
					\begin{tabular}{c r c r}
						\toprule 
						Class & Prob & Class & Prob \\
						\otoprule  
						\texttt{0} & 0.00\%  & \texttt{5} & 0.28\% \\
						\texttt{1} & 0.00\%  & \texttt{6} & 0.03\% \\
						\texttt{2} & 0.00\%  & \texttt{7} & 0.00\% \\
						\texttt{3} & 0.28\%  & \texttt{8} & 99.41\% \\
						\texttt{4} & 0.00\%  & \texttt{9} & 0.00\% \\
						\bottomrule
					\end{tabular}
				}
			\end{minipage}   
			\caption{Joint approach with $C=0.5$. Most likely class is \texttt{8}.}
		\end{subfigure}%
		\vskip-0.5\baselineskip
		\caption{Joint tomographic reconstruction and classification of MNIST images. Training data is to the left and reconstructed image with classification probabilities are on the right. Data on overall performance on all of the MNIST dataset (accuracy) is given in \cref{tab:JointRecoClass}.
		}\label{fig:JointRecoClass}
	\end{figure}

	\FloatBarrier
	\subsubsection{Joint tomographic reconstruction and segmentation}\label{sec:JointRecoSeg}
	\begin{description}
		\item[Task:] Recover the probability map for segmentation of a grey scale image (see \cref{sec:Segmentation} with $k=2$) from noisy parallel beam tomographic data. 
		In this specific example, we consider segmenting the grey matter of \ac{CT} images of the brain, which is relevant in imaging of neurodegerantive diseases like 
		Alzheimers' disease.  
		\item[Data:]
		Elements in $\DataSpace$ are real-valued functions on lines representing parallel beam tomographic data, which are digitized by sampling the the angular variable at 30~uniformly sampled points in $[0,\pi]$ with 183~lines/angle. 
		We furthermore add 0.1\% additive Gaussian noise to data.
		\item[Model parameter space:] 
		Elements in $\RecSpace$ are functions representing images supported on a fixed rectangular region $\signaldomain \subset \Real^2$, so $\RecSpace:=\Lebesgue^2(\signaldomain,\Real)$.
		These are discretized by uniform sampling on a $128 \times 128$ grid.
		The loss $\SignalLoss \colon \RecSpace \times \RecSpace \to \Real$ is the squared $\Lebesgue^2$-distance.
		\item[Decision space:] 
		Elements in $\DecisionSpace$ are point-wise probability distributions over binary images on $\signaldomain$, which can be represented by grey-scale images with values in $[0,1]$ that gives the probability that a point is part of the segmented object. 
		Hence, $\DecisionSpace=\MeasurableMaps\bigl(\signaldomain,[0,1]\bigr)$ with the loss function $\DecisionLoss \colon \DecisionSpace \times \DecisionSpace \to \Real$
		as the cross entropy:  
		\[ 
		\DecisionLoss(\decision,\decisionother)    
		:= \int_{\signaldomain} \Bigl[ - \sum_{i =1}^2 \decision(t)(i) \log\bigl[ \decisionother(t)(i) \bigr] \Bigr]\dint t
		\quad\text{for $\decision, \decisionother \in \MeasurableMaps\bigl(\signaldomain,[0,1]\bigr)$.}
		\]
		\item[Reconstruction and task operators:] 
		Reconstruction $\RecOp{\MLsignalparam} \colon \DataSpace \to \RecSpace$ is given by the Learned Primal-Dual scheme in \cite{Adler:2018aa} and task $\TaskOp{\MLtaskparam} \colon \RecSpace \to \DecisionSpace$ is given by an ``off the shelf'' U-net convolutional neural net for segmentation \cite{Ronneberger:2015aa}.
		\item[Joint training:] Joint supervised data is given as  100 triplets $(\signal_i,\data_i,\decision_i)$ where $\decision_i$ is the segmentation (binary image). We extend joint training data by data argumentation ($\pm 5$ pixel translation and $\pm 10^{\circ}$ rotation).
		There was no pre-training. 
	\end{description}
	Example outcomes are summarized in \cref{fig:JointRecSegLossVal,fig:JointRecoSeg}. 
	Note that $C \to 0$ corresponds to the sequential approach, so the image for $C=0.01$ can be seen as the outcome from a sequential approach. 
	Clearly, a joint approach with $C\approx 0.5$ or $0.9$ outperforms a sequential one.
	
	Next, as $C$ decreases the reconstruction becomes more adapted to the task of segmentation. In the limit $C \to 0$ the task part is viable but the reconstruction image is useless, which is to be expected. 
	In the other direction, as $C$ increases the reconstructed image becomes less adapted to the task and the latter becomes increasingly challenging due to the low contrast between white and grey matter.
	
	Finally, using $C>0$ not only reduces the non-uniqueness as explained in \cref{eq:JointRegEffect}, it further regularizes in the sense that information from the reconstruction guides the segmentation, which otherwise would amount to learning the segmented image directly from the noisy sinogams.
	Intuitively there seems to be an ``information exchange'' between the task of reconstruction and that of segmentation, which when properly balanced by choosing $C$ acts as a regularizer for the segmentation, e.g., the white/grey matter contrast in the reconstruction is overemphasized for small $C$.
	This improves the interpretability since it shows how the reconstructed image ``helps'' in interpreting why a certain segmentation is taken.
	\begin{figure}[t]
		\centering
		\begin{tikzpicture}
		\begin{axis}[
		height=0.17\textheight,
		width=0.9\linewidth,
		x label style={at={(axis description cs:1,0.35)},anchor=north,xshift=0.2cm},
		xlabel=$C$,
		ylabel style={yshift=0.5cm,rotate=-90},  
		ylabel=$\SignalLoss$,  
		ymode=log,
		log basis y={10},
		xtick={0, 1, 2, 3, 4, 5},
		xticklabels={0.01, 0.1, 0.5, 0.9, 0.99, 0.999}
		]  
		\addplot table [y=SLoss, x=Loc, col sep=comma]{rec_seg_data.txt};
		\end{axis}
		\end{tikzpicture} 
		\\[0.75em]
		\begin{tikzpicture}
		\begin{axis}[
		height=0.17\textheight,
		width=0.9\linewidth,
		x label style={at={(axis description cs:1,0.35)},anchor=north,xshift=0.2cm},
		xlabel=$C$,
		ylabel style={yshift=0.5cm,rotate=-90},  
		ylabel=$\DecisionLoss$,
		ymode=log,
		log basis y={10},
		xtick={0, 1, 2, 3, 4, 5},
		xticklabels={0.01, 0.1, 0.5, 0.9, 0.99, 0.999}
		]  
		\addplot table [y=DLoss, x=Loc, col sep=comma]{rec_seg_data.txt};
		\end{axis}
		\end{tikzpicture}
		\caption{Log-log plot of loss functions for joint reconstruction and segmentation after joint training for different $C$ in \cref{eq:JointLoss}.
			Clearly, there is no joint minimizer but $0.5 \lesssim C \lesssim 0.9$ is a good compromise.}\label{fig:JointRecSegLossVal}
	\end{figure}
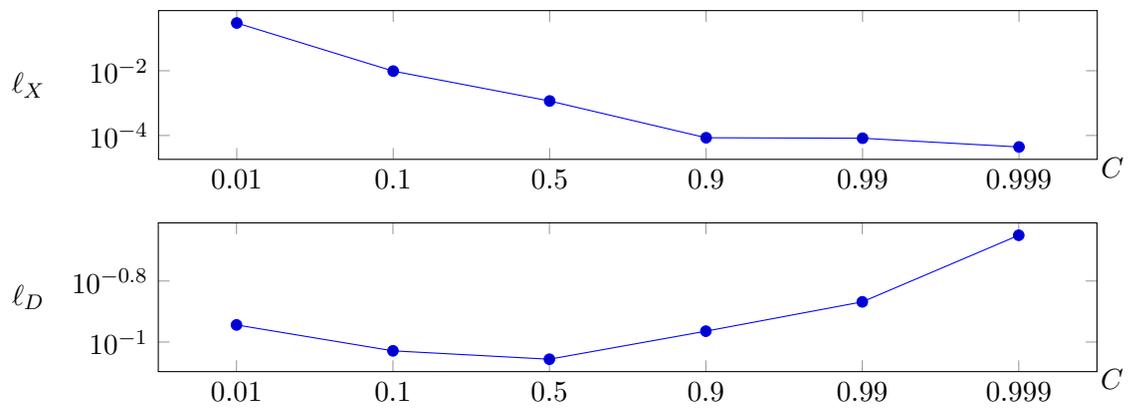
	
	\begin{figure}
		\centering
		\begin{minipage}{0.49\linewidth}
			\centering
			\includegraphics[width=0.49\linewidth, trim={21mm 19mm 32mm 6mm}, clip]{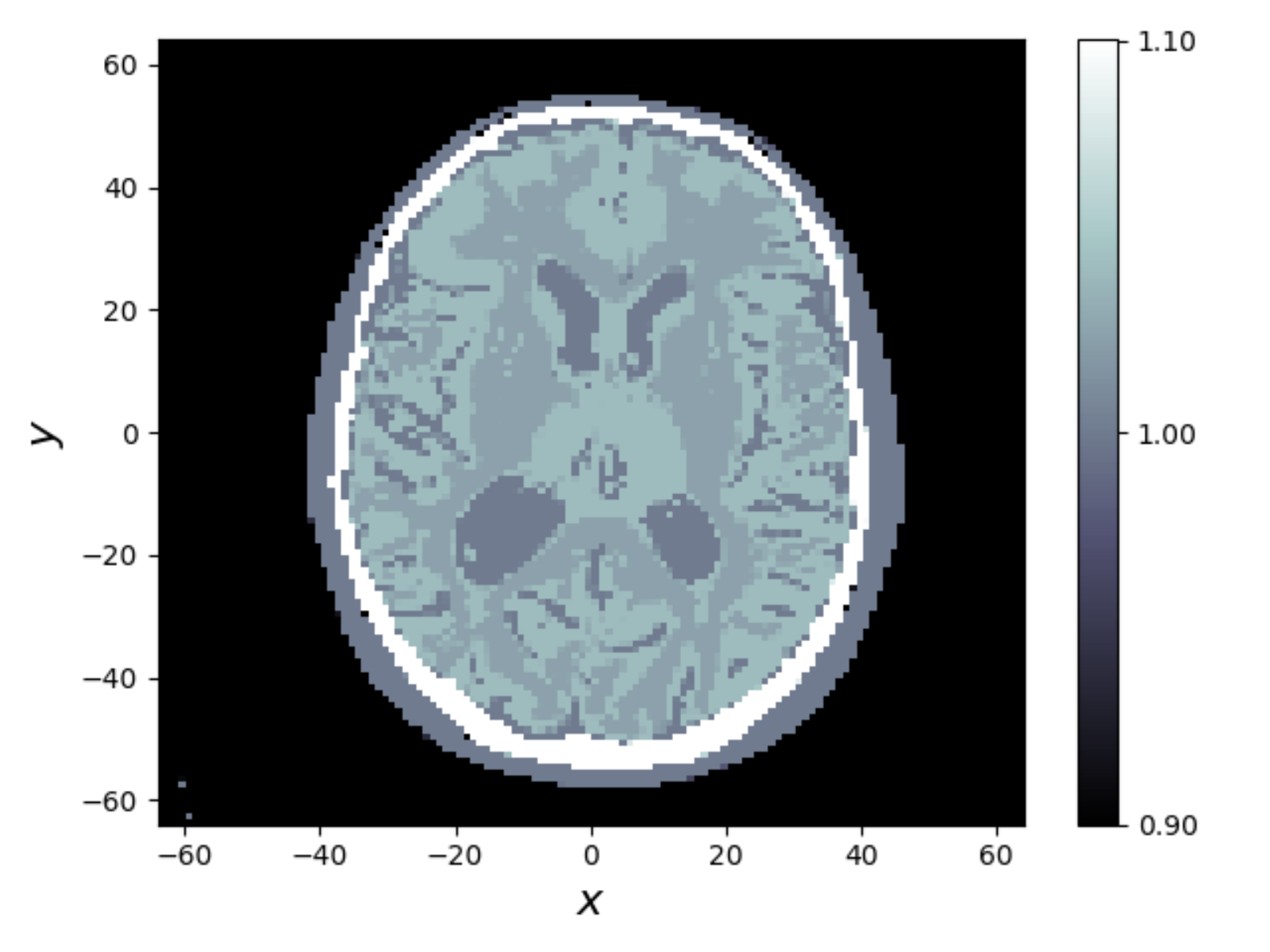}
			\includegraphics[width=0.49\linewidth, trim={21mm 19mm 32mm 6mm}, clip]{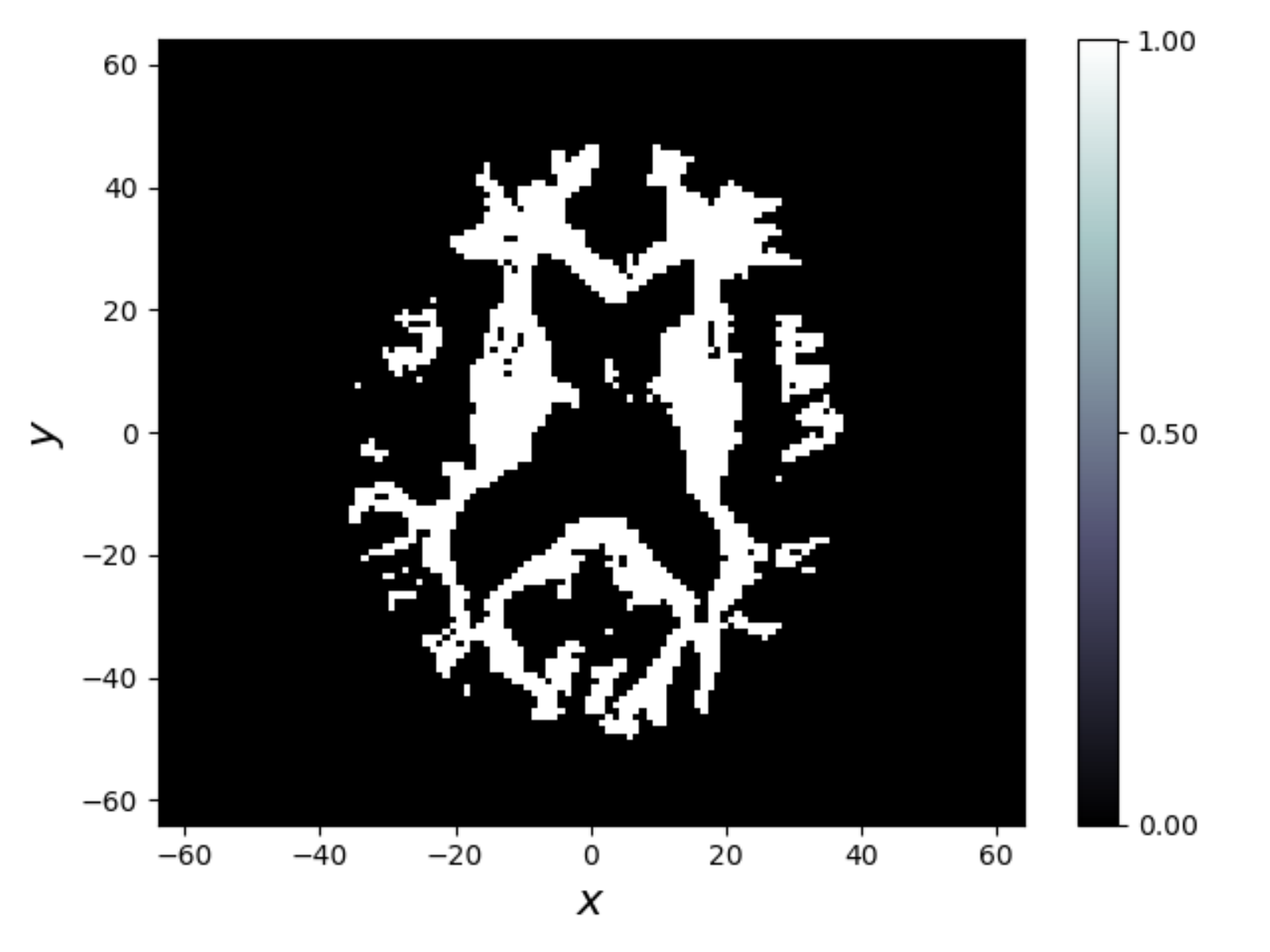}
			True image \& segmentation.    
		\end{minipage}
		\hfill
		\begin{minipage}{0.49\linewidth}
			\centering
			\includegraphics[width=0.49\linewidth, trim={21mm 19mm 32mm 6mm}, clip]{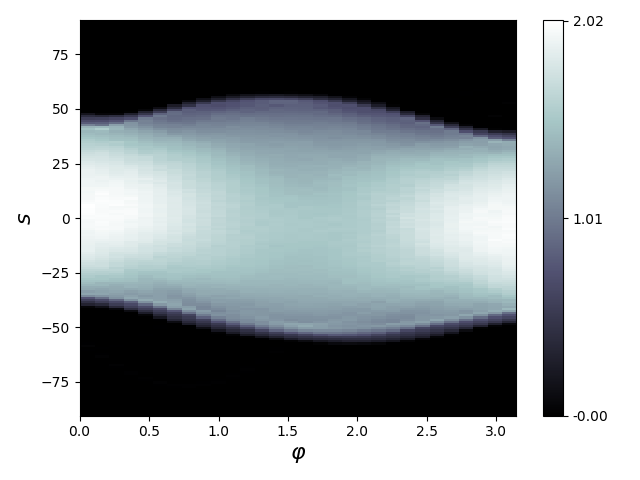}
			\\
			Data: Sinogram, 30 angles \& 177~lines/angle.
		\end{minipage}
		\\[1em]
		\begin{minipage}{0.49\linewidth}
			\centering  
			\includegraphics[width=0.49\linewidth, trim={21mm 19mm 32mm 6mm}, clip]{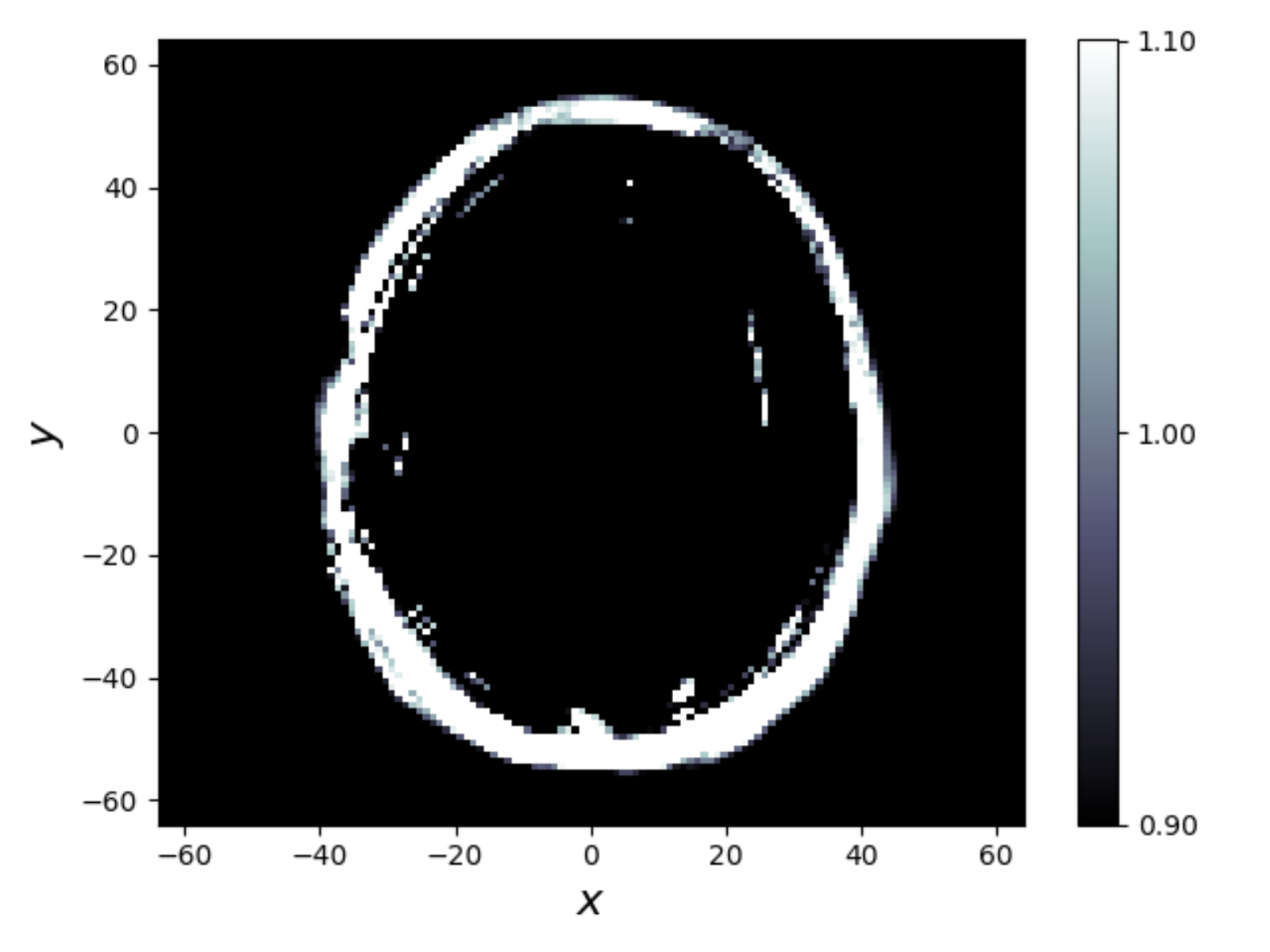}
			\includegraphics[width=0.49\linewidth, trim={21mm 19mm 32mm 6mm}, clip]{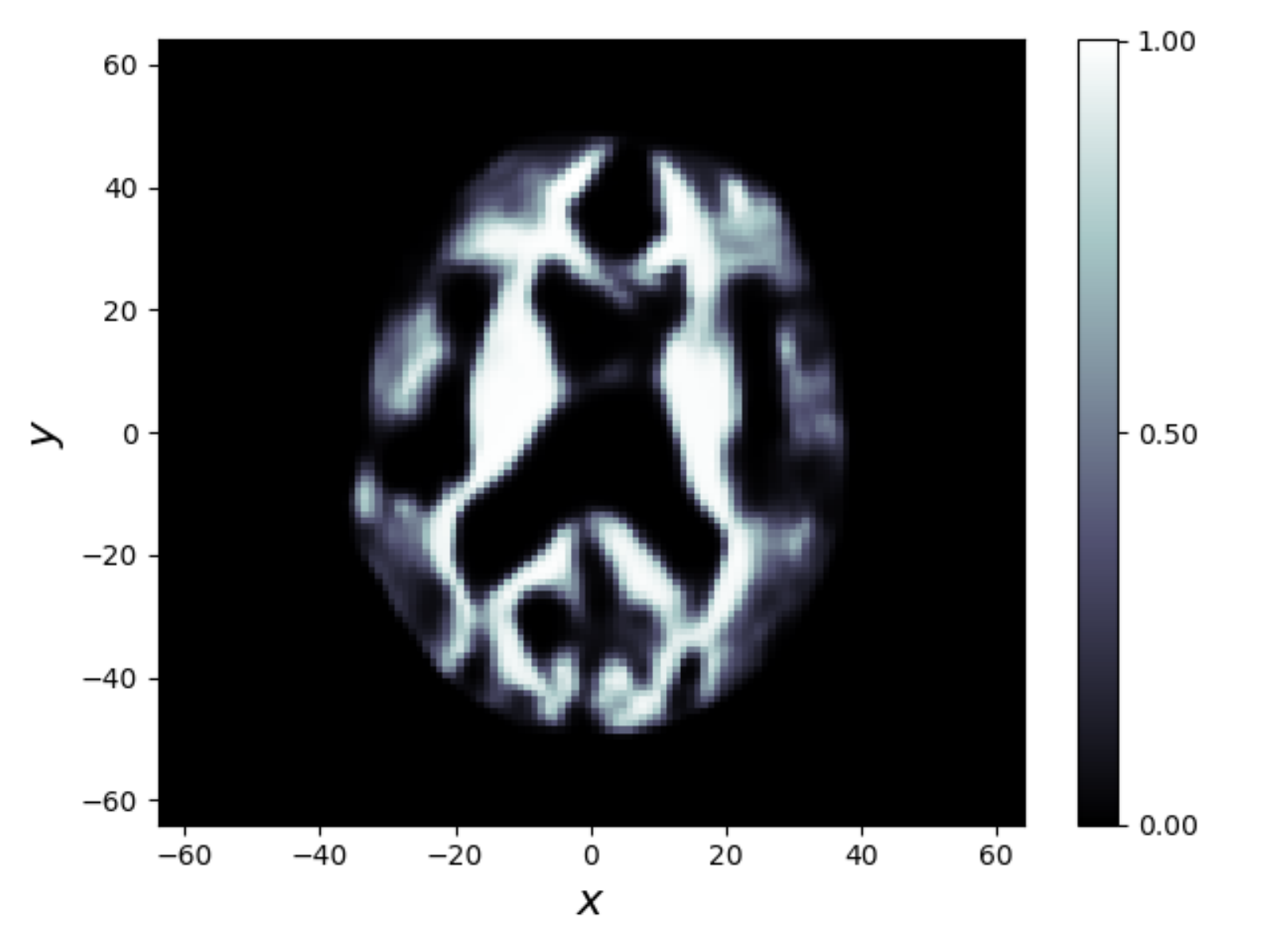}
			Joint reco. \& segmentation: $C=0.01$.
		\end{minipage}
		\hfill
		\begin{minipage}{0.49\linewidth}
			\centering  
			\includegraphics[width=0.49\linewidth, trim={21mm 19mm 32mm 6mm}, clip]{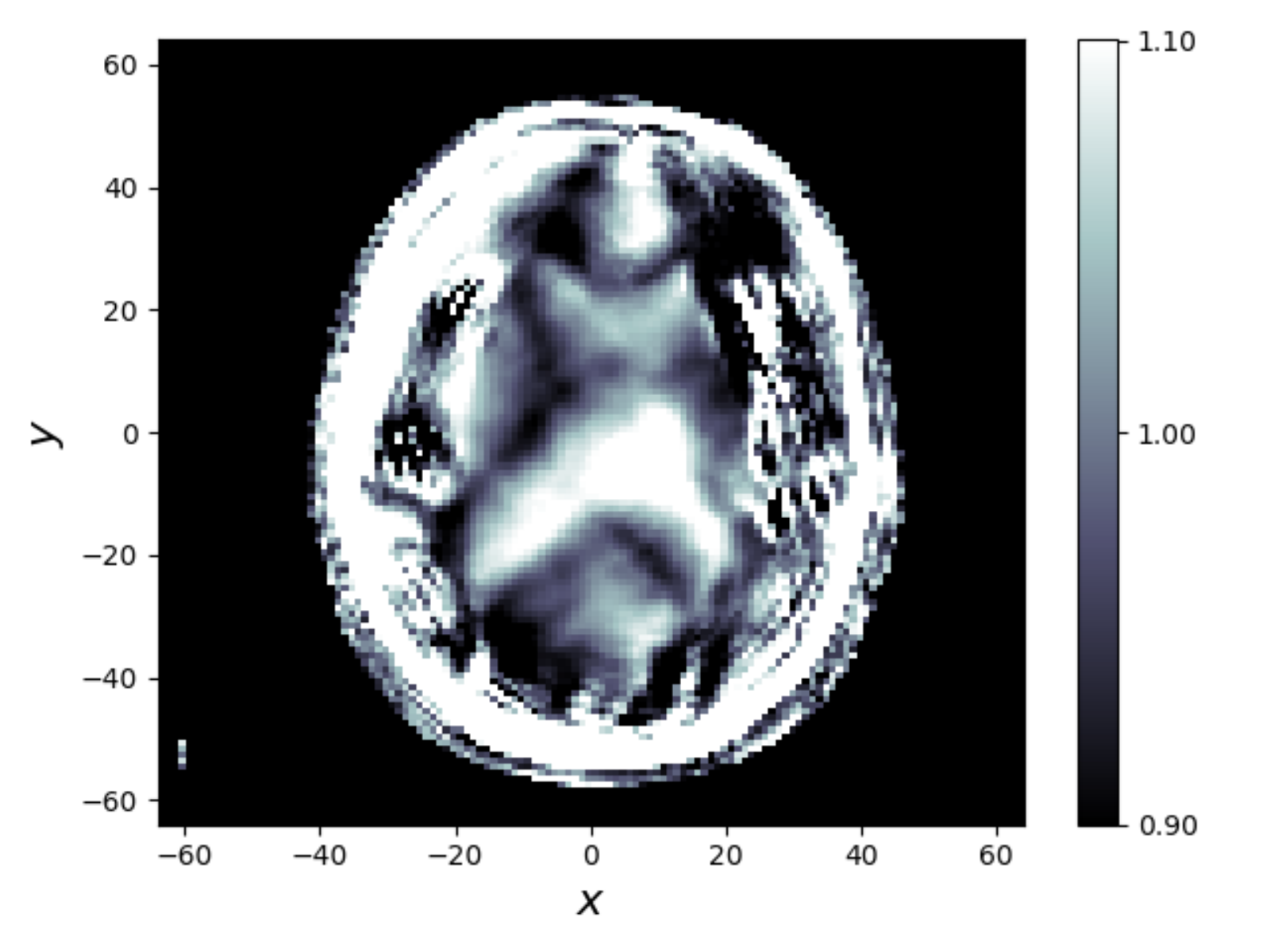}
			\includegraphics[width=0.49\linewidth, trim={21mm 19mm 32mm 6mm}, clip]{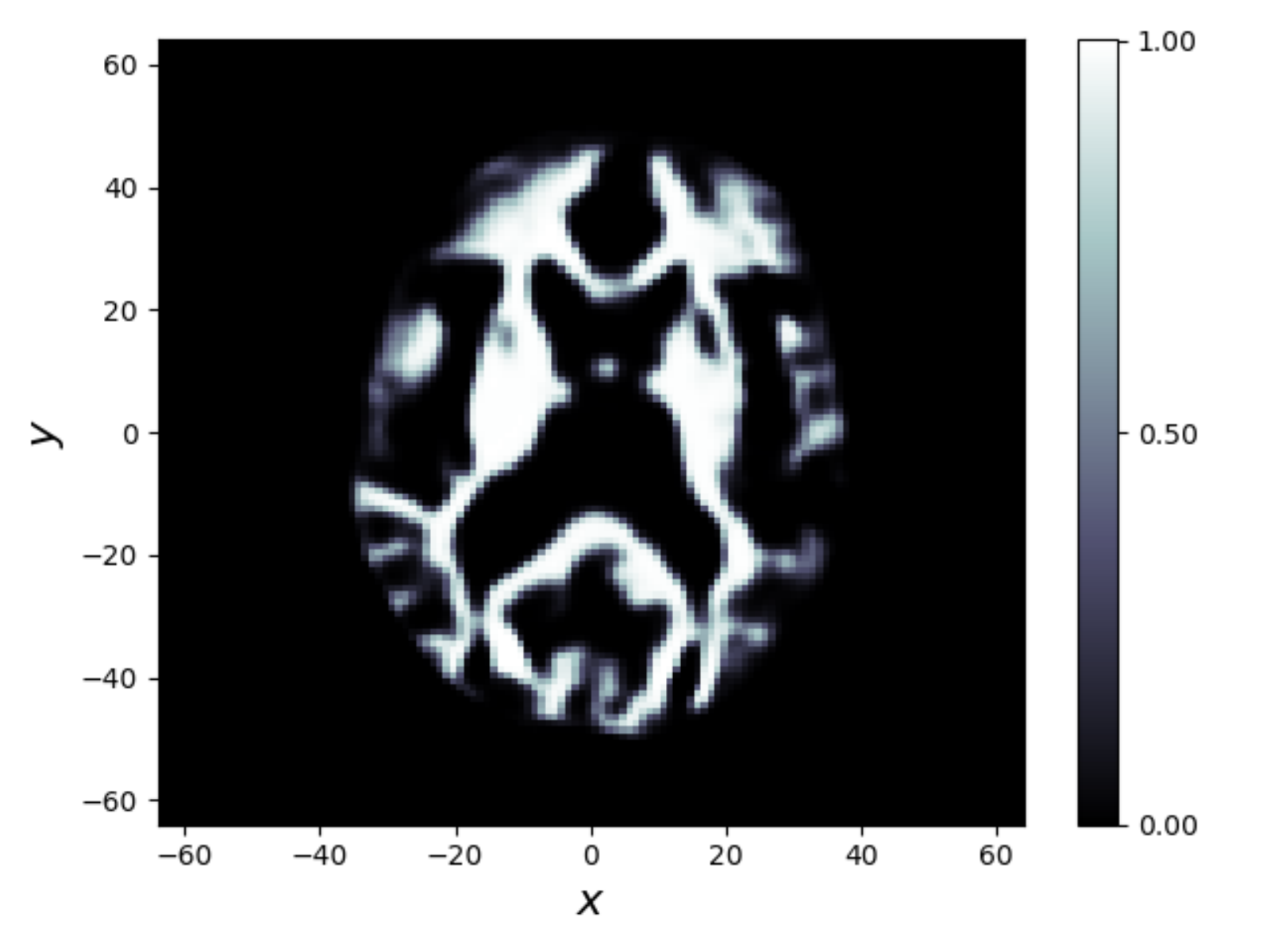}
			Joint reco. \& segmentation: $C=0.1$.
		\end{minipage}  
		\\[1em]
		\begin{minipage}{0.49\linewidth}
			\centering  
			\includegraphics[width=0.49\linewidth, trim={21mm 19mm 32mm 6mm}, clip]{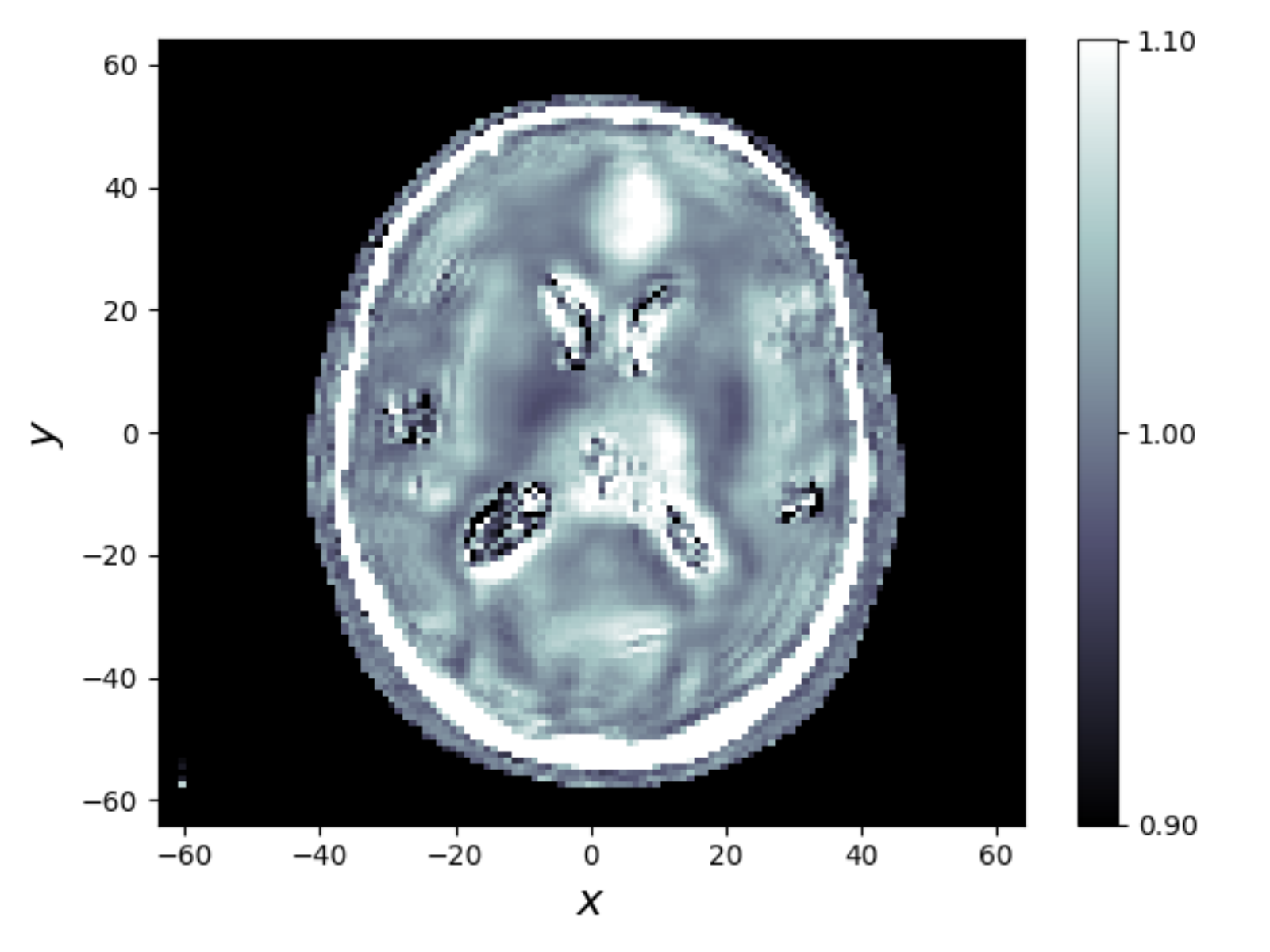}
			\includegraphics[width=0.49\linewidth, trim={21mm 19mm 32mm 6mm}, clip]{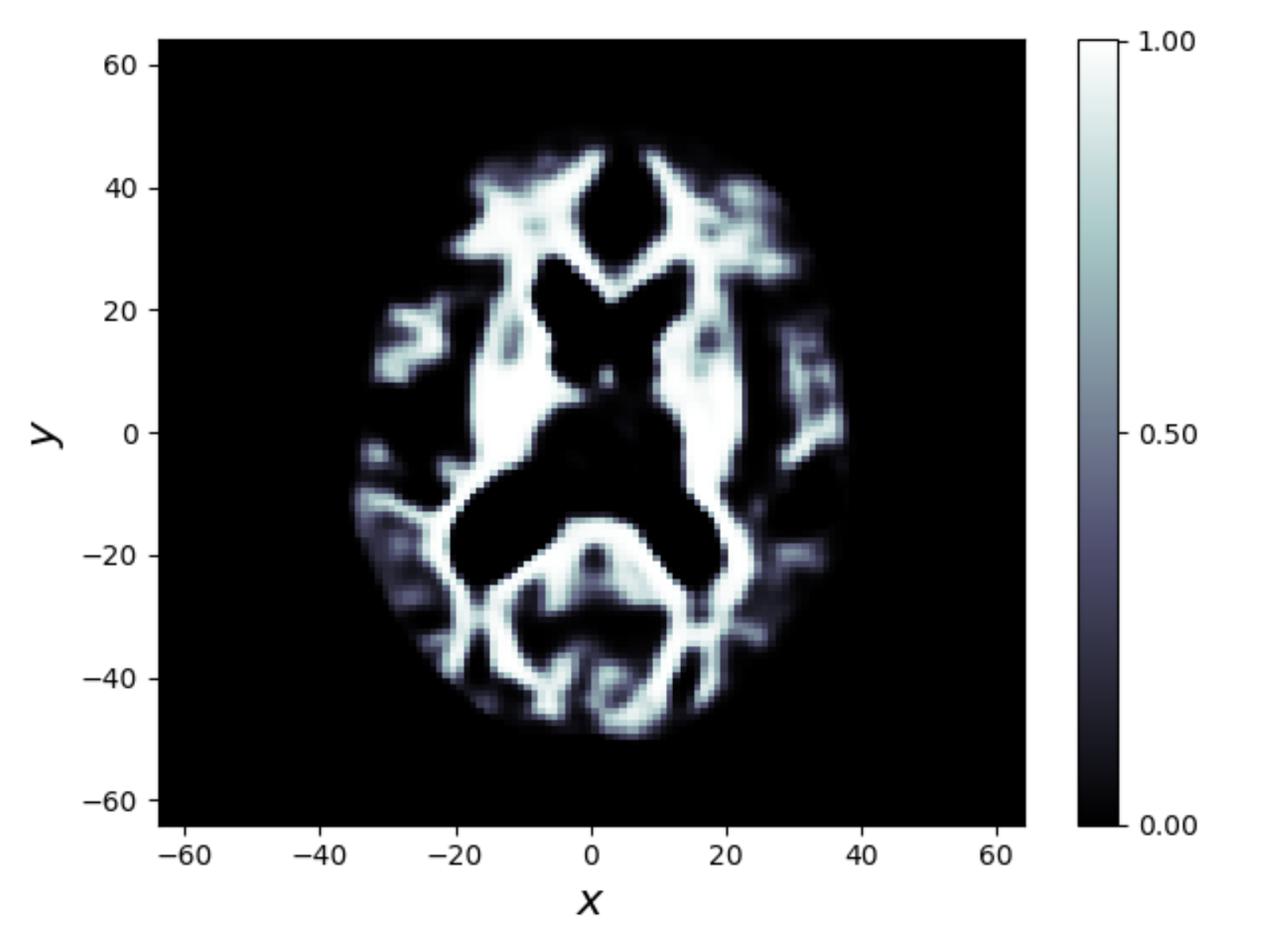}
			Joint reco. \& segmentation: $C=0.5$.
		\end{minipage}
		\hfill
		\begin{minipage}{0.49\linewidth}
			\centering  
			\includegraphics[width=0.49\linewidth, trim={21mm 19mm 32mm 6mm}, clip]{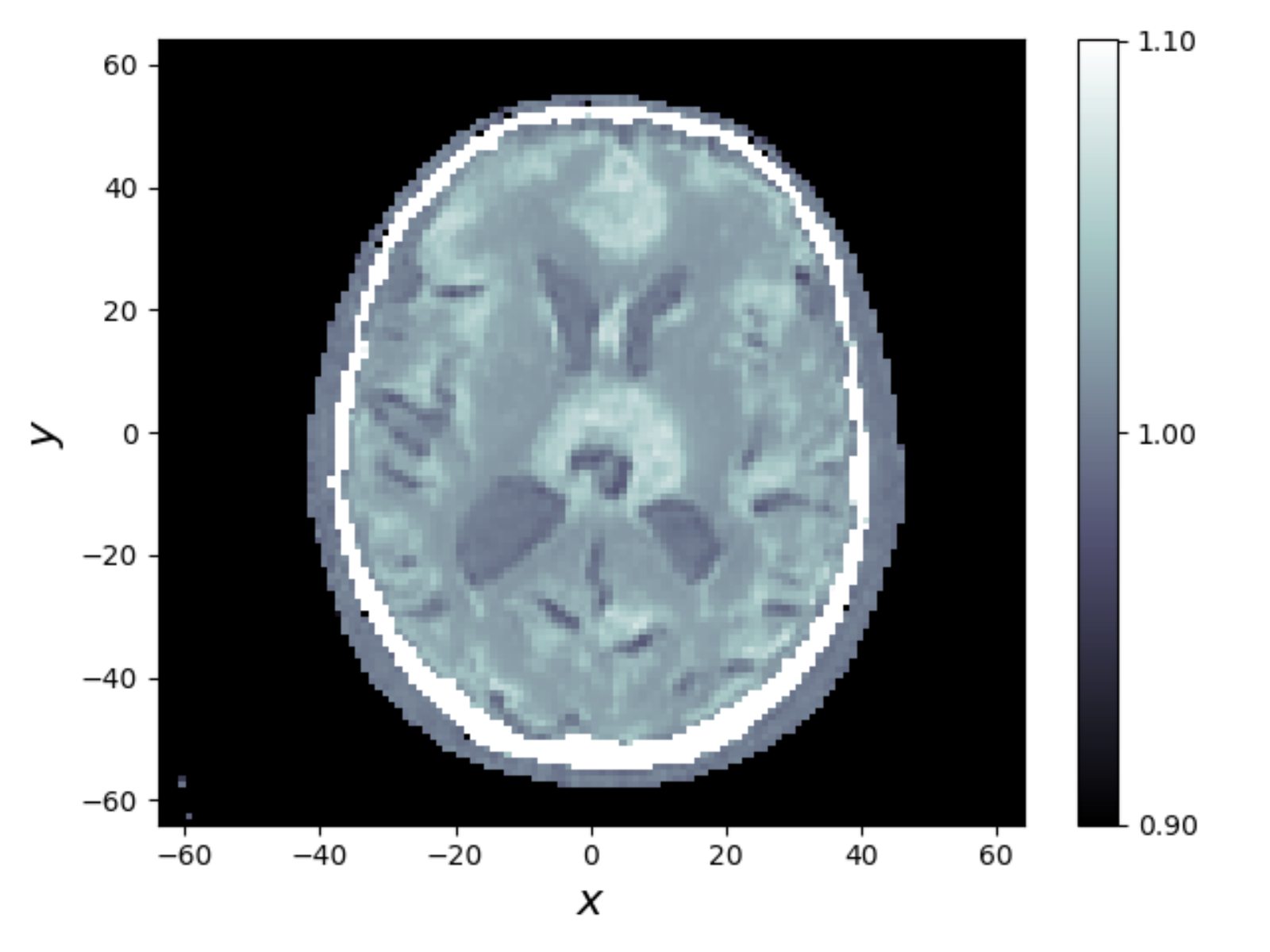}
			\includegraphics[width=0.49\linewidth, trim={21mm 19mm 32mm 6mm}, clip]{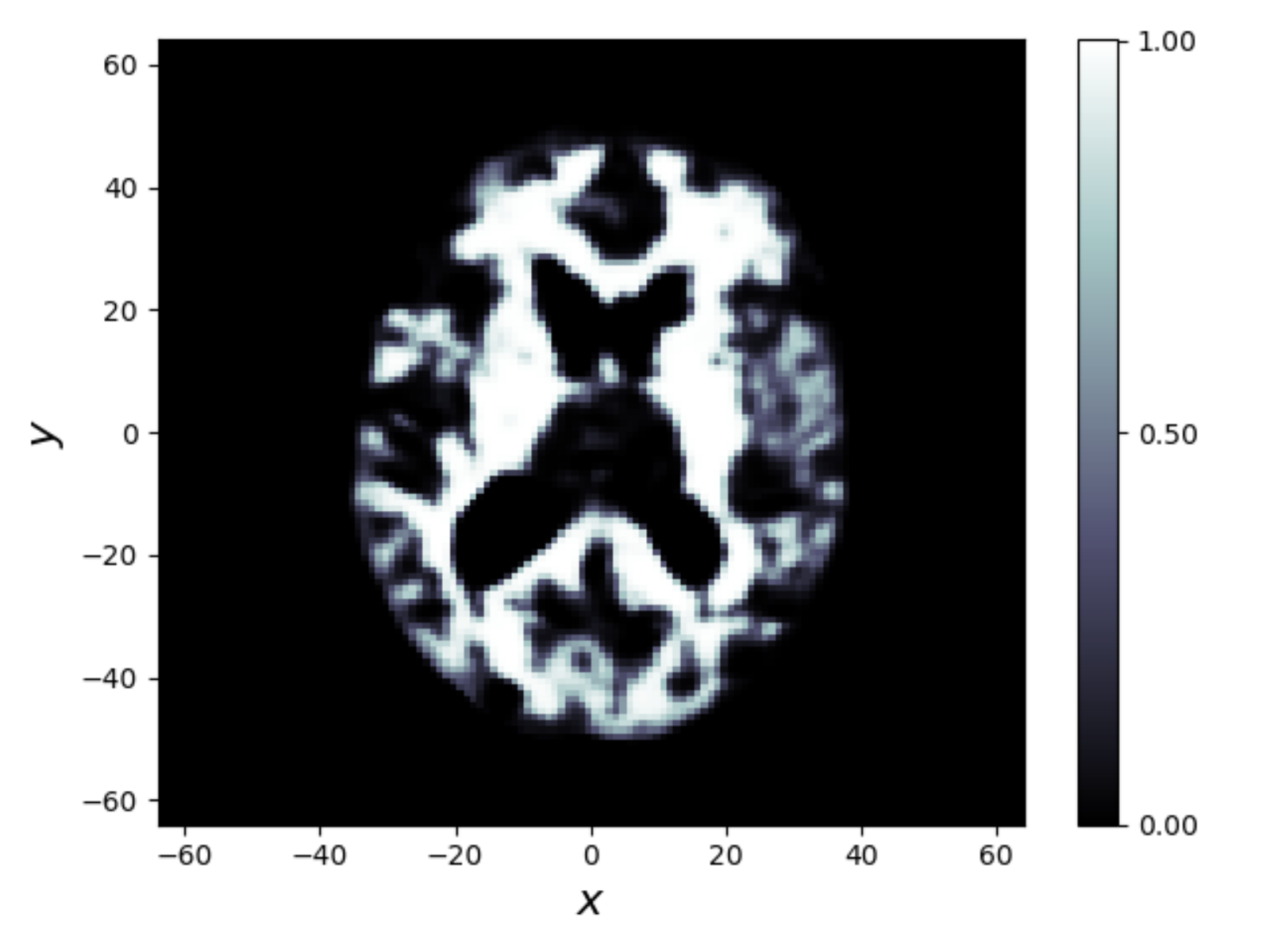}
			Joint reco. \& segmentation: $C=0.9$.
		\end{minipage}  
		\\[1em]  
		\begin{minipage}{0.49\linewidth}
			\centering  
			\includegraphics[width=0.49\linewidth, trim={21mm 19mm 32mm 6mm}, clip]{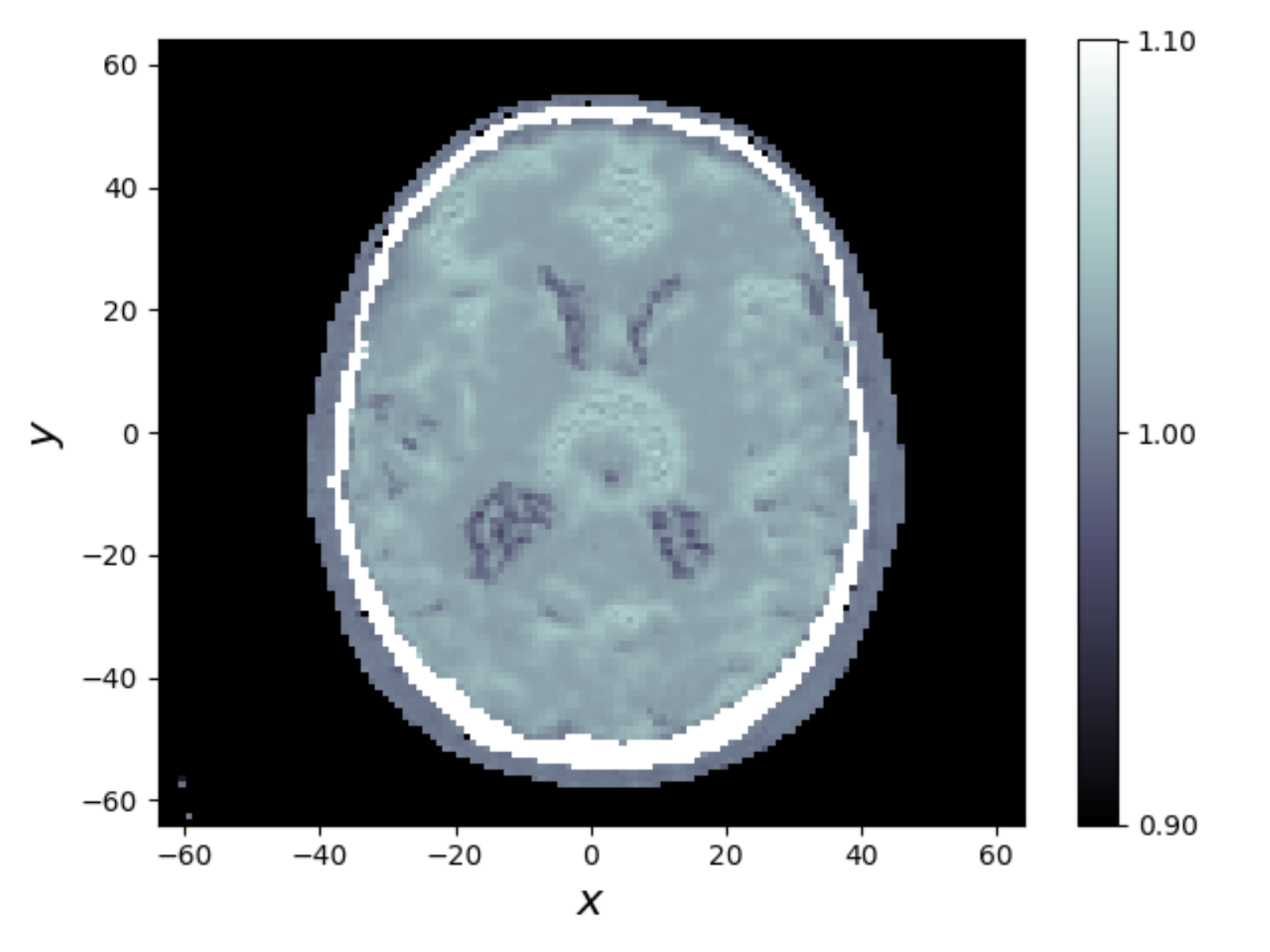}
			\includegraphics[width=0.49\linewidth, trim={21mm 19mm 32mm 6mm}, clip]{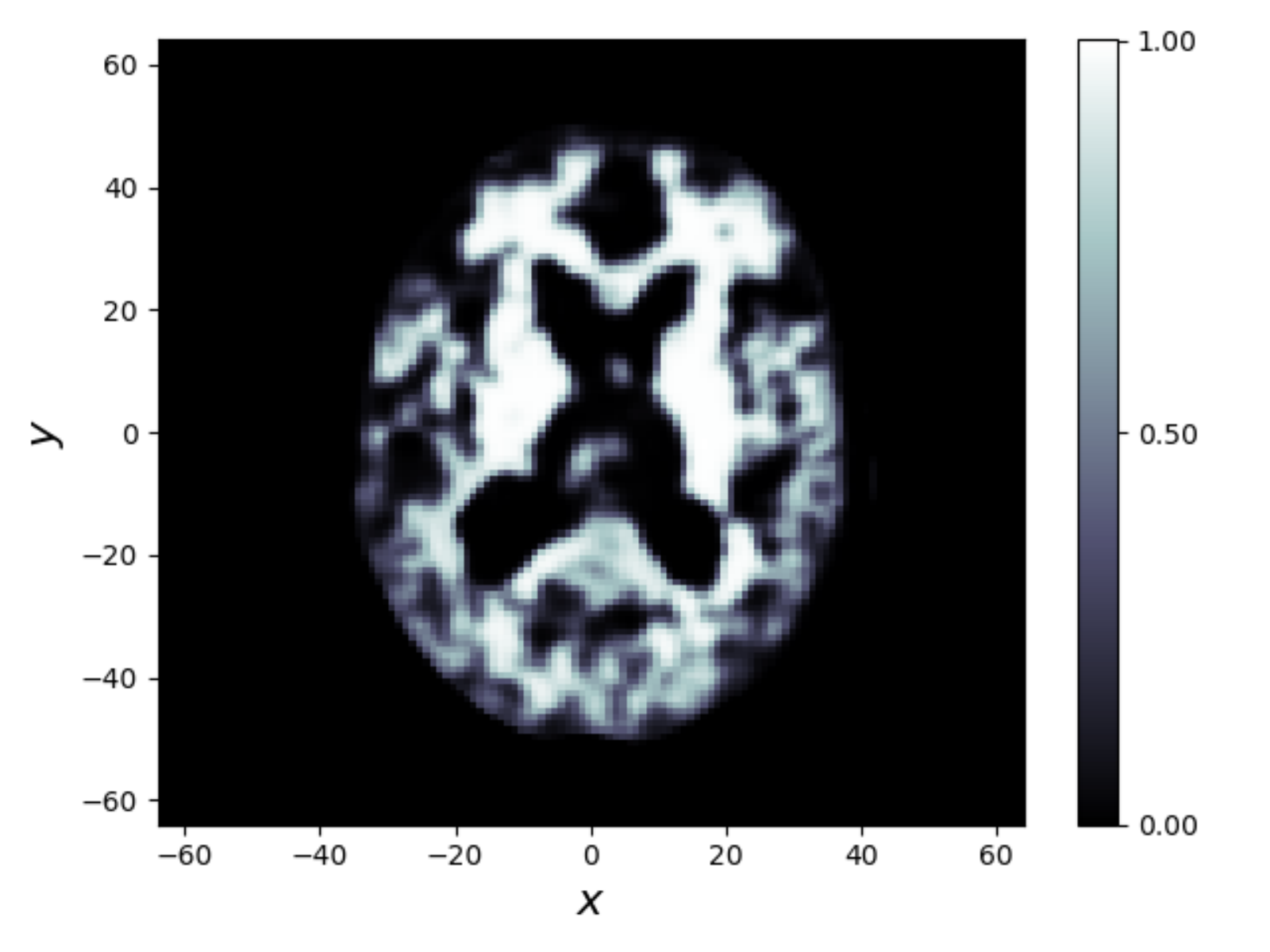}
			Joint reco. \& segmentation: $C=0.99$.
		\end{minipage}
		\hfill
		\begin{minipage}{0.49\linewidth}
			\centering  
			\includegraphics[width=0.49\linewidth, trim={21mm 19mm 32mm 6mm}, clip]{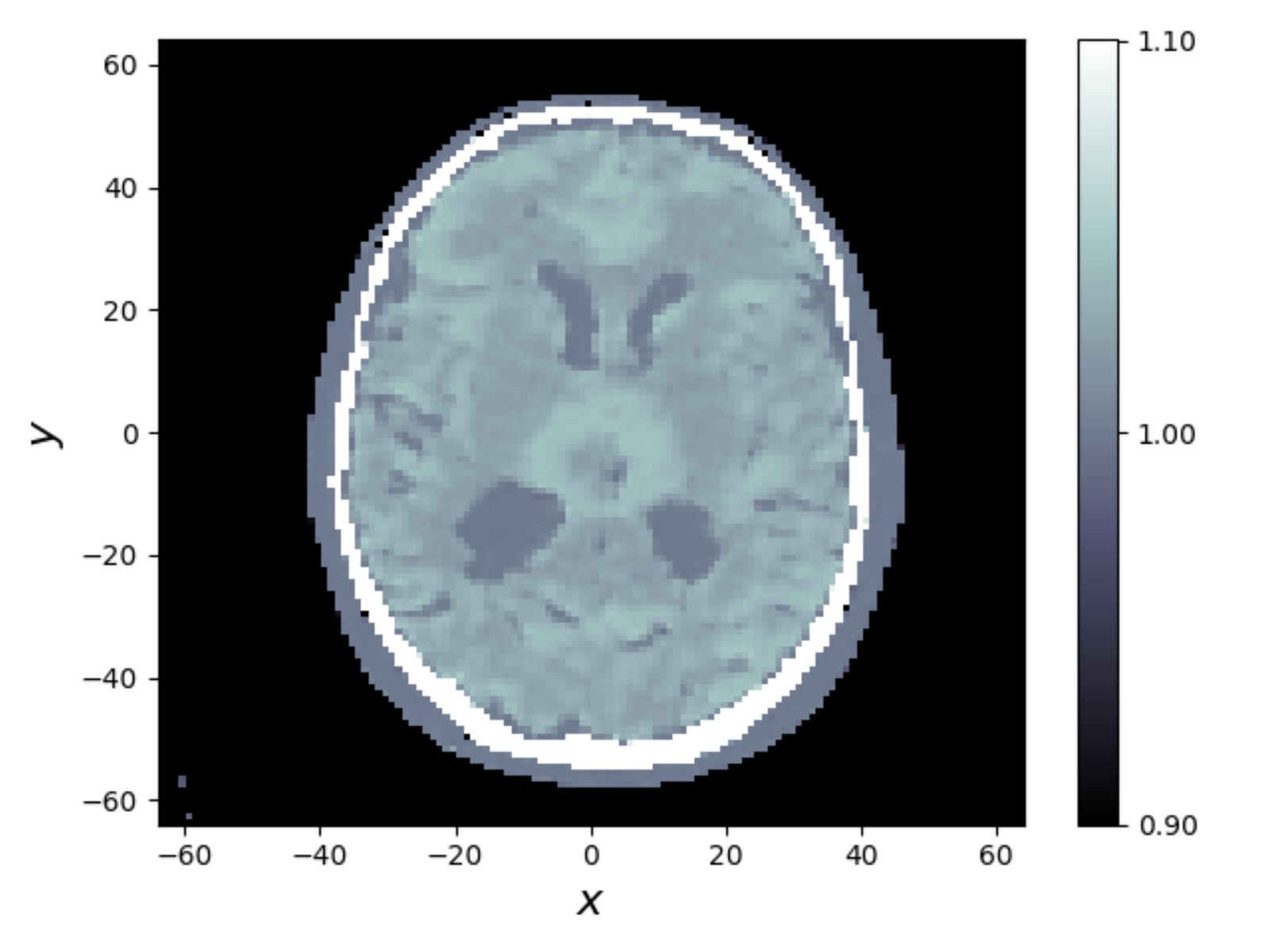}
			\includegraphics[width=0.49\linewidth, trim={21mm 19mm 32mm 6mm}, clip]{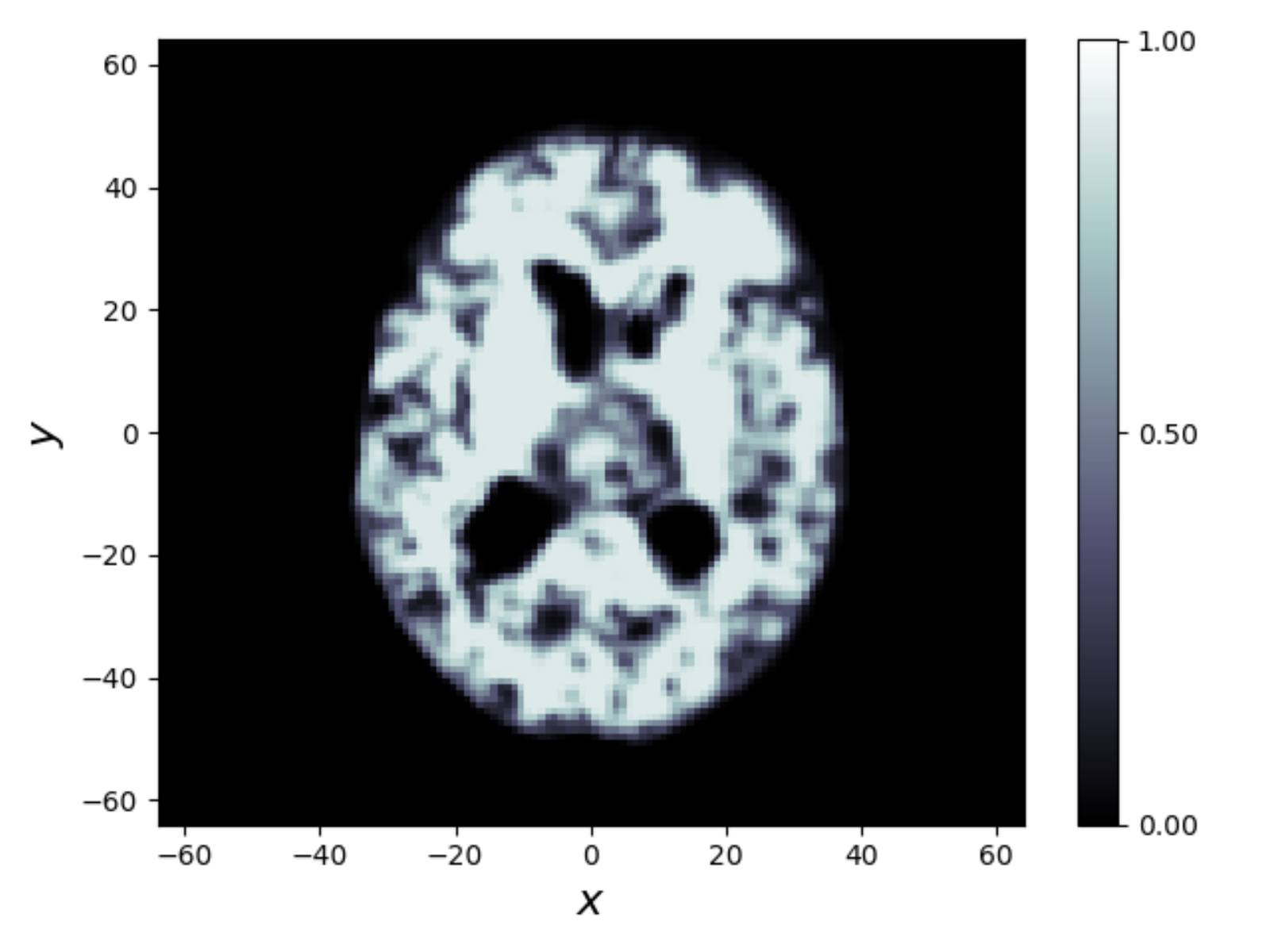}
			Joint reco. \& segmentation: $C=0.999$.
		\end{minipage}
		\caption{Joint tomographic reconstruction and segmentation for different values of $C$ in \cref{eq:JointLoss}. 
			The segmentation is a normalized grey-scale image denoting the probability that a point belongs to the segmented structure.
			The choice $C=0.9$ seems to be a good compromise for a good reconstruction \emph{and} segmentation (see \cref{fig:JointRecSegLossVal}).
			Note also that $C \to 1$ gives the sequential approach, so $C=0.999$ may serve as a proxy for it.
			Reconstructions take a few milliseconds to perform on a desktop gaming PC.}
		\label{fig:JointRecoSeg}  
	\end{figure}

	\FloatBarrier
	\section{Theoretical considerations}\label{sec:TheoryConsider}
	The joint task adapted reconstruction defined in \cref{eq:JointApproach} is given by combining two optimal decision rules into a single decision rule, one for reconstruction that acts on data and the other encoding a task that acts on model parameters. 
	It is therefore natural to investigate whether the theoretical machinery developed for Bayesian inversion can be used to analyze regularizing properties of this joint approach.
	An example would be to investigate the conditions under which the joint approach is a regularization in the formal sense, which means proving existence, stability, and posterior consistency that is preferably complemented by providing contraction rates, see \cite[chapters~6-9]{Ghosal:2017aa} for the precise definitions.
	
	Much of the theory on Bayesian inversion that deals with such matters is well understood for linear problems in the finite dimensional setting \cite{Kaipio:2005aa}, but things quickly become complicated for infinite dimensional non-parametric problems.
	There has been nice progress recently on consistency, posterior contraction rates, and characterization of the microscopic fluctuations of the posterior that is  relevant for Bayesian inversion, see \cite{Stuart:2010aa,Dashti:2016aa,Nickl:2017aa} for nice surveys and \cite{Monard:2017aa} for a in-depth treatment of reconstruction relevant for tomographic imaging.
	On the other hand, the theory and associated results require too many restrictive assumptions that renders them inapplicable for analyzing the task adapted approach in \cref{eq:JointApproach}.
	To conclude, theory of Bayesian inversion is in its current state not useful for characterizing conditions for when the joint task adapted reconstruction in \cref{eq:JointApproach} is a regularization.
	
	Another line of investigation considers the potential advantage that comes with using a joint approach over a sequential one.
	Since the reconstruction and task operators are trained separately in a sequential approach, some information is inevitably lost when applying a regularized reconstruction operator.
	In contrast, both reconstruction and task operators are trained simultaneously in a joint approach so there is a better chance of preserving the information.
	Hence, we expect a joint approach to perform better, which is also supported by the observation in \cref{eq:JointRegEffect} and the empirical evidence in \cref{sec:Applications}.
	
	Now, albeit convincing, the above heuristic argument is flawed! 
	In fact, as stated by \cref{prop:probpinv}, it is surprisingly difficult to theoretically prove that a joint approach outperforms a sequential one in a non-parametric setting where one has access to all of data. 
	The reason is that many standard operators that map data space to model parameter space are formally information conserving in such a setting.
	The adjoint of the forward operator, its Moore-Penrose pseudo-inverse, and even some regularized reconstruction operators such as the usual Tikhonov regularization are information conserving under standard Gaussian noise. 
	For 2D parallel beam tomography, yet another example is the \ac{FBP} reconstruction operator (with a filter that is strictly non-zero in frequency space).
  \begin{proposition}\label{prop:probpinv}
		Let $\stsignal$ be a $\RecSpace$-valued random variable, and $\stdata$ be a $\DataSpace$-valued random variable, both defined on the same probability space.
    Let $\Pi \colon \DataSpace \to \DataSpace$ be a measurable operator with closed range.
		Let $\OpB$ be an arbitrary measurable map defined on $\DataSpace$ that is injective when restricted to $\range(\Pi)$.
    Then, the following holds:
    \begin{equation}
      \Expect\bigl[ f(\stsignal)|\stdata \bigr] = \Expect\bigl[ f(\stsignal) \mid \OpB(\Pi \stdata) \bigr]
      \text{ for all $f$}
      \quad\iff\quad
      \stsignal  \perp\!\!\!\perp (\stdata-\Pi \stdata) \mid \Pi \stdata
    \end{equation}
    where $f$ spans all random variables over $\RecSpace$.
  \end{proposition}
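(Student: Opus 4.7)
The plan is to recognise this equivalence as a direct consequence of the standard characterisation of conditional independence in terms of conditional expectations: for random variables $X$, $Y$, $Z$ on a common probability space (with $Y$, $Z$ valued in standard Borel spaces) one has $X \perp\!\!\!\perp Y \mid Z$ if and only if $\Expect\bigl[f(X) \mid Y, Z\bigr] = \Expect\bigl[f(X) \mid Z\bigr]$ for every bounded measurable $f$. The proof then reduces to showing that, under the hypotheses, the two conditioning $\sigma$-algebras appearing in the left-hand side of the claimed equivalence can be identified with the natural ones, namely $\sigma(\stdata) = \sigma(\Pi \stdata,\, \stdata - \Pi \stdata)$ and $\sigma(\OpB(\Pi \stdata)) = \sigma(\Pi \stdata)$.

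The first identification is immediate from the fact that $\stdata \mapsto (\Pi \stdata, \stdata - \Pi \stdata)$ is a measurable bijection of $\DataSpace$ onto $\range(\Pi) \times \DataSpace$ with measurable inverse $(u,v) \mapsto u+v$. For the second identification I would argue as follows. Since $\DataSpace$ is a separable Banach space and $\range(\Pi)$ is closed, $\range(\Pi)$ is a Polish subspace of $\DataSpace$ and in particular a standard Borel space. Kuratowski's theorem (an injective Borel-measurable map between standard Borel spaces sends Borel sets to Borel sets, and its inverse on the image is Borel-measurable) applied to $\OpB|_{\range(\Pi)}$ then gives $\sigma(\OpB(\Pi \stdata)) = \sigma(\Pi \stdata)$, so that $\Expect\bigl[f(\stsignal) \mid \OpB(\Pi \stdata)\bigr] = \Expect\bigl[f(\stsignal) \mid \Pi \stdata\bigr]$ almost surely.

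Combining these two identifications, the left-hand side of the equivalence reads
\[
\Expect\bigl[f(\stsignal) \mid \Pi \stdata,\, \stdata - \Pi \stdata\bigr] = \Expect\bigl[f(\stsignal) \mid \Pi \stdata\bigr] \quad \text{for every bounded measurable } f,
\]
which is exactly the standard characterisation of $\stsignal \perp\!\!\!\perp (\stdata - \Pi \stdata) \mid \Pi \stdata$ applied with $X=\stsignal$, $Y=\stdata-\Pi\stdata$, $Z=\Pi\stdata$. Both directions of the equivalence follow from the same classical statement, so no separate arguments are required.

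The main obstacle is the Kuratowski step: a merely injective measurable map between arbitrary measurable spaces need not have a measurable inverse on its image, and without this the identity $\sigma(\OpB(\Pi \stdata)) = \sigma(\Pi \stdata)$ may fail. The closed-range assumption on $\Pi$ together with separability of $\DataSpace$ is precisely what puts us in the standard Borel setting where Kuratowski applies; in a more general measurable-space framework one would instead need to postulate that $\OpB|_{\range(\Pi)}$ is bimeasurable onto its image. A secondary technical point is to ensure that the random element $\stdata - \Pi \stdata$ is well-defined and measurable as a $\DataSpace$-valued variable, which is automatic here since $\Pi$ is measurable and $\DataSpace$ is a separable Banach space.
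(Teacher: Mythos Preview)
Your proposal is correct and follows essentially the same route as the paper's proof: both reduce the equivalence to the standard characterisation of conditional independence after identifying $\sigma(\stdata) = \sigma(\Pi\stdata,\,\stdata-\Pi\stdata)$ and $\sigma(\OpB(\Pi\stdata)) = \sigma(\Pi\stdata)$. Your argument is in fact more careful than the paper's, which simply asserts the second $\sigma$-algebra equality from injectivity of $\OpB$ on $\range(\Pi)$; you correctly identify that this step relies on the closed-range hypothesis (making $\range(\Pi)$ a standard Borel space) together with Kuratowski's theorem, whereas the paper leaves this implicit.
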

  Before getting to the proof, let us comment on the implication of the statement above.
  The operator $\Pi$ typically represents an orthogonal projection onto the closure of the range of $\ForwardOp$.
  The result above states that the probability of $\stsignal$ conditioned on $\stdata$ is the same as the one conditioned on $\OpB(\Pi \stdata)$ if and only if, given the knowledge of $\Pi \stdata$, the ``noise'' in the null space of $\Pi$, namely $\stdata - \Pi\stdata$, is independent of $\stsignal$.
  \begin{proof}
  The proof is essentially a rewriting of the definitions.
Introduce the notations $\stdata_1 := \Pi \stdata$ and $\stdata_2 := \stdata - \Pi \stdata$, so that $\stdata = \stdata_1 + \stdata_2$.
Then $\Expect[f(\stsignal)|\stdata] = \Expect[f(\stsignal)|\stdata_1,\stdata_2]$, as $\stdata$ and $(\stdata_1, \stdata_2)$ generate the same $\sigma$-algebras.
  The injectivity of $\OpB$ on $\range(\Pi)$ implies that the $\sigma$-algebra generated by $\OpB\circ \Pi \circ \stdata$ and $\Pi \circ \stdata$ are the same, so $\Expect[f(\stsignal)|\stdata_1] = \Expect[f(\stsignal)|\OpB(\stdata_1)]$.
Now, requiring that $\Expect[f(\stsignal)|\stdata_1,\stdata_2] = \Expect[f(\stsignal)|\stdata_1]$ holds for all $f$ is exactly the statement of conditional independence in the claim.
  \end{proof}

	\begin{corollary}\label{corr:TaskVsJoint}
		Consider the setting in \cref{sec:TaskAdaptedAbstract} for task adapted reconstruction and assume in particular that
    $\data$ and $\task$ are conditionally independent given $\signal$.
		Finally, let $\OpB$ satisfy the assumptions in \cref{prop:probpinv};
 we also assume that the equality in \cref{prop:probpinv} holds, that is $\pi(\signal \mid \data) = \pi(\signal \mid \OpB(\data))$.
    Then, 
		\begin{equation}\label{eq:EquivSeqVsJOint}
			\CondLaw(\task \mid \data)
			=
			\CondLaw\bigl(\task \mid \OpB(\data)\bigr).
		\end{equation}
	\end{corollary}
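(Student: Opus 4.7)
The plan is to prove the identity by a direct marginalization argument over $\signal$, using the two given assumptions in sequence. First I would invoke the conditional independence of $\data$ and $\task$ given $\signal$ to expand $\CondLaw(\task \mid \data)$ as a mixture over the posterior on $\signal$; then substitute using \cref{prop:probpinv}; and finally recollapse the integral into $\CondLaw(\task \mid \OpB(\data))$ by noting that the same conditional independence passes to $\OpB(\data)$.

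More concretely, the first step is to write, for any measurable test function $g$ on $\TaskSpace$,
\begin{equation*}
\Expect\bigl[g(\sttask)\mid \stdata\bigr]
= \Expect\Bigl[\Expect\bigl[g(\sttask)\mid \stsignal,\stdata\bigr]\,\Big|\,\stdata\Bigr]
= \Expect\Bigl[\Expect\bigl[g(\sttask)\mid \stsignal\bigr]\,\Big|\,\stdata\Bigr],
\end{equation*}
where the last equality uses the hypothesis that $\stdata$ and $\sttask$ are conditionally independent given $\stsignal$. Denote $h(\signal) := \Expect[g(\sttask)\mid \stsignal=\signal]$, so the right hand side equals $\Expect[h(\stsignal)\mid \stdata]$.

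The second step is to apply the hypothesis taken from \cref{prop:probpinv}, namely $\CondLaw(\signal\mid\data) = \CondLaw(\signal\mid\OpB(\data))$, to the measurable function $h$. This gives
\begin{equation*}
\Expect\bigl[h(\stsignal)\mid\stdata\bigr] = \Expect\bigl[h(\stsignal)\mid\OpB(\stdata)\bigr].
\end{equation*}
The third step reverses the first: since $\OpB(\stdata)$ is a measurable function of $\stdata$, the conditional independence of $\stdata$ and $\sttask$ given $\stsignal$ implies conditional independence of $\OpB(\stdata)$ and $\sttask$ given $\stsignal$. Hence
\begin{equation*}
\Expect\bigl[h(\stsignal)\mid\OpB(\stdata)\bigr]
= \Expect\Bigl[\Expect\bigl[g(\sttask)\mid\stsignal,\OpB(\stdata)\bigr]\,\Big|\,\OpB(\stdata)\Bigr]
= \Expect\bigl[g(\sttask)\mid\OpB(\stdata)\bigr].
\end{equation*}
Chaining the three identities yields $\Expect[g(\sttask)\mid\stdata] = \Expect[g(\sttask)\mid\OpB(\stdata)]$ for every bounded measurable $g$, which is exactly \cref{eq:EquivSeqVsJOint}.

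The argument is essentially a standard tower-property computation, so I do not anticipate any serious obstacle; the only point needing care is invoking \cref{prop:probpinv} with the test variable $f = h$ (where $h$ is a measurable function on $\RecSpace$), which is permitted since the proposition's equality is stated to hold for all measurable $f$. No topology on $\TaskSpace$ or $\DecisionSpace$ is needed beyond measurability, so the argument proceeds in full generality.
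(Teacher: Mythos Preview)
Your proof is correct and follows essentially the same approach as the paper: both arguments factor $\CondLaw(\task\mid\data)$ through $\signal$ using conditional independence, invoke the assumed equality $\CondLaw(\signal\mid\data)=\CondLaw(\signal\mid\OpB(\data))$, and then recollapse using the fact that conditional independence with $\stdata$ given $\stsignal$ is inherited by $\OpB(\stdata)$. The only cosmetic difference is that the paper works directly with conditional densities and integrals in $\signal$, whereas you phrase the same steps via the tower property of conditional expectation against arbitrary test functions.
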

	\begin{proof}
    The conditional independence assumption can be written as $\pi(\task \mid \signal,\data) = \pi(\task \mid \signal)$.
    Using this, we compute
    \(
      \CondLaw(\signal,\data,\task) = \CondLaw(\task \mid \signal,\data) \CondLaw(\signal,\data) = \CondLaw(\task \mid \signal) \CondLaw(\signal,\data)
\),
    which yields
    \begin{equation}
      \label{eq:Eq1}
      \CondLaw(\signal,\task \mid \data) = \CondLaw(\task \mid \signal) \CondLaw(\signal \mid \data)
      .
    \end{equation}

    Notice now that $\OpB(\data)$ and $\task$ are also conditionally independent given $\signal$, so we similarly obtain
    \begin{equation}
      \label{eq:Eq2}
      \CondLaw(\signal, \task \mid \OpB(\data)) = \CondLaw(\task \mid \signal) \CondLaw(\signal \mid \OpB(\data))
      .
    \end{equation}

		
		Now, \cref{eq:Eq1,eq:Eq2} imply in particular that 
		\begin{equation}\label{eq:EqNext}
			\begin{split}
				\CondLaw(\task \mid \data) 
				&= \int  \CondLaw(\task, \signal \mid \data)  \dint \signal
				= \int    \CondLaw(\task \mid \signal) \CondLaw(\signal \mid \data) \dint \signal
				\\
				\CondLaw\bigl(\task \mid \OpB(\data) \bigr)
				&= \int  \CondLaw\bigl(\task, \signal \mid \OpB(\data) \bigr)  \dint \signal
				= \int    \CondLaw(\task \mid \signal) \CondLaw\bigl(\signal \mid \OpB(\data) \bigr) \dint \signal.
			\end{split}     
		\end{equation}
		Our assumption is that $\CondLaw(\signal \mid \data) = \CondLaw\bigl(\signal \mid \OpB(\data)\bigr)$, which combined with \cref{eq:EqNext} yields \cref{eq:EquivSeqVsJOint}.
		This concludes the proof.
	\end{proof}
	By \cref{corr:TaskVsJoint} we see directly that the conditional distribution of $\sttask$ given data $\data \in \DataSpace$ is, as $\TaskSpace$-valued random variables, equal to the conditional distribution of $\sttask$ given an initial reconstruction $\OpB(\data) \in \RecSpace$.
	In particular, a task adapted reconstruction method (either sequential or joint) is equivalent to first performing reconstruction by applying the fixed operator $\OpB \colon \DataSpace \to \RecSpace$, which is not trained, followed by $\OpC \colon \RecSpace \to \DecisionSpace$ that is given as 
	\[ \OpC:= \TaskOp{} \circ \RecOp{} \circ \OpB^{-1}. \]
	Note here that $\OpC$, which is trained, is a measurable map defining a non-randomized decision rule that in principle serves as a ``task'' operator.
	
	To summarize, we cannot resort to ``information bottleneck'' type of arguments as an explanation for why the joint approach should outperform only training a task operator in this general setting. On the other hand, the above argument hints that an explanation must involve either the choice of architecture or the training protocol. Both of these are examples of classical and widely studied problems in deep learning concerning why deep learning ``works'' and these remain largely unsolved.
	Another argument in favor of a joint approach is that it is highly non-trivial to select an appropriate architecture for parametrizing $\OpC$, whereas $\TaskOp{}$ and $\RecOp{}$ are easier to parametrize by means of neural networks. 
	Another possible reason is that the operations, like evaluating $\OpB$ or its inverse $\OpB^{-1}$, may not be stable. 
	Finally, as we have seen from the examples, using knowledge about the reconstruction may in fact act as a regularizer, either by improving the trainability or the generalization properties.
	
	\section{Discussion and outlook}\label{sec:Discussion}
	A key aspect for the implementation of the joint task adapted reconstruction method in \cref{eq:JointApproach} is that both decision rules are given by trainable neural networks, which after joint training  forms a single intertwined neural network. 
	In such case, the problem reduces to solving \cref{eq:TrainJointApproach}.
	
	The neural network for the reconstruction should here preferably incorporate knowledge about how data is generated.
	Learned iterative methods, like the Learned Primal-Dual method, are therefore well suited for this task since they are given by a (deep) neural network that embeds the forward operator and a statistical model for the nose in measured data \cite{Adler:2017aa,Adler:2018aa}. 
	
	Next, as shown in \cref{sec:ExampleTasks,sec:OtherTasks}, a wide range of tasks can be interpreted as applying an optimal decision rule on the model parameters. 
	The abstract framework for task adapted reconstruction (\cref{sec:TaskAdaptedAbstract}) works with \emph{any} task that can be represented by a neural network as long as the parametrization and the loss functions are differentiable, like those listed in \cref{sec:ExampleTasks,sec:OtherTasks}.
	Hence, our approach opens up for \emph{truly task adapted reconstruction that goes well beyond performing reconstruction jointly with simple feature extraction}.
	In particular, more advanced tasks, such as image caption generation or image-processing steps in radiomics \cite{Gillies:2015aa,Lambin:2017aa}, can be performed jointly with reconstruction. 
	This potential is also mentioned in the editorial for the special issue on machine learning for image reconstruction in IEEE Transactions on Medical Imaging \cite{Wang:2018ab} where the editors for the special issue introduce the term \emph{rawdiomics} (on p.~1294) for task adapted reconstruction applied to radiomics.
	
	An important advantage that comes with a joint approach is increased robustness. 
	Advanced tasks, like radiomics, commonly rely on deep neural networks that are trained on images in a supervised setting. 
	Images are however inferred in a pre-processing step from measured data, so contrast and texture may depend on the instrumentation used for acquiring the data and the reconstruction method used for computing the images. 
	Hence, a neural network that has been trained against images acquired from a particular equipment, or obtained using a particular reconstruction method, may generalize poorly when either of these factors change. 
	This is especially the case for tasks involving elements of visual classification, such as semantic segmentation, that can be sensitive to variations in texture and contrast.
	In contrast, task adapted reconstruction acts on measured data instead of images (model parameters). 
	Using a reconstruction step that incorporates a physics guided model for how measured data is generated results in a joint approach that is much more robust against variations in how data is acquired and processed.
	As an example, jointly training a learned iterative method with neural network(s) involved in radiomics will result in a joint scheme that is expected to be much more robust against variations in scanner and acquisition protocol. 
	This is essential if radiomics is to be part of clinical-decision support systems for improving diagnostic, prognostic, and predictive accuracy.
	
	Another important advantage with the proposed task adapted reconstruction method relates to computationally feasibility. 
	The trained neural network for task adapted reconstruction scales to large scale problems. 
	Such scalability remains a serious issue with the variational approaches mentioned in \cref{sec:Overview}. 
	As an example, state-of-the-art methods for joint reconstruction and segmentation are based on a variational approach using a the Mumford-Shah functional, which quickly become computationally unfeasible. 
	In contrast, the 2D examples in \cref{fig:JointRecoSeg} for joint reconstruction and segmentation are obtained using the approach in \cref{sec:JointRecoSeg} and these take a few milliseconds on a desktop gaming PC. 
	
	Finally, examples involving tomographic image reconstruction (\cref{sec:Applications}) support the claim that a joint approach outperforms a sequential one. 
	Understand this theoretically (\cref{sec:TheoryConsider}) is however an open problem.
	In particular, there is currently no theory motivating using a joint loss of the type in \cref{eq:JointLoss}, even though empirical evidence suggests such a choice outperforms the en-to-end and sequential approaches.

	\section*{Acknowledgments}
	The work by Jonas Adler, Olivier Verdier, and Ozan \"Oktem has been supported by the Swedish Foundation for Strategic Research grant AM13-0049, Industrial PhD grant ID14-0055, and Elekta AB. Carola-Bibiane Sch\"onlieb and Sebastian Lunz acknowledge support from the Engineering and Physical Sciences Research Council (EPSRC) ``EP/K009745/1'', the EPSRC grant ``EP/M00483X/1'', the EPSRC centre ``EP/N014588/1'', the Leverhulme Trust project ``Breaking the non-convexity barrier'', the CHiPS (Horizon 2020 RISE project grant), the Cantab Capital Institute for the Mathematics of Information, and the Alan Turing Institute ``TU/B/000071''.

	\bibliographystyle{siamplain}
	\bibliography{references}
	
\end{document}